\documentclass{article}

\usepackage{microtype}
\usepackage{graphicx}
\usepackage{subfigure}
\usepackage{booktabs} 

\usepackage{hyperref}



\usepackage[accepted]{icml2020}

\icmltitlerunning{Performance Analysis of Generalized Margin Maximizers}
\usepackage{amsfonts}       
\usepackage{nicefrac}       
\usepackage{microtype}      
\usepackage{amsthm}
\usepackage{mathtools,amssymb,algorithm,caption,cases,bm,float,graphicx,url,color}
\usepackage{sidecap}

\captionsetup{compatibility=false}

\newcommand{\mmu}{\bm \mu}

\newtheorem{lem}{Lemma}
\newtheorem{theorem}{Theorem}
\newtheorem{defn}{Definition}
\newtheorem{rem}{Remark}
\newtheorem{cor}{Corollary}

\newcommand{\w}{\mathbf{w}}
\renewcommand{\v}{\mathbf{v}}
\newcommand{\y}{\mathbf{y}}
\renewcommand{\u}{\mathbf{u}}
\newcommand{\Rad}{\textsc{Rad}}
\newcommand{\Prox}{\text{Prox}}
\renewcommand{\P}{\mathbf{P}}
\newcommand{\x}{\mathbf{x}}
\newcommand{\h}{\mathbf{h}}
\newcommand{\g}{\mathbf{g}}
\newcommand{\q}{\mathbf{q}}
\renewcommand{\H}{\mathbf{H}}
\newcommand{\R}{\mathbb{R}}
\newcommand{\normal}{\mathcal{N}}
\newcommand{\Id}{\mathbf{I}}
\newcommand{\la}{\bm \lambda}
\newcommand{\sgn}{\textsc{Sign}}

\newcommand{\lnorm}[1]{\left\Vert {#1} \right\Vert}


\newcommand{\Expect}{\operatorname{\mathbb{E}}}

\newcommand{\Probe}{\mathbb{P}}
\newcommand{\Prob}[1]{\Probe\left\{ #1 \right\}}

\begin{document}

\twocolumn[
\icmltitle{The Performance Analysis of Generalized Margin \\Maximizer (GMM) on Separable Data}





\icmlcorrespondingauthor{Fariborz Salehi}{fsalehi@caltech.com}
\begin{icmlauthorlist}
\icmlauthor{Fariborz Salehi, Ehsan Abbasi,  Babak Hassibi}{Caltech}
\end{icmlauthorlist}

\icmlaffiliation{Caltech}{Department of Electrical Engineering, California Institute of Technology, Pasadena, California, USA}


\icmlkeywords{Machine Learning, ICML}

\vskip 0.3in
]



\printAffiliationsAndNotice{} 

\begin{abstract}
Logistic models are commonly used for binary classification tasks. The success of such models has often been attributed to their connection to maximum-likelihood estimators. It has been shown that gradient descent algorithm, when applied on the logistic loss, converges to the max-margin classifier (a.k.a. hard-margin SVM). The performance of the max-margin classifier has been recently analyzed in~\cite{montanari2019generalization, deng2019model}. Inspired by these results, in this paper, we present and study a more general setting, where the underlying parameters of the logistic model possess certain structures (sparse, block-sparse, low-rank, etc.) and introduce a more general framework (which is referred to as “Generalized Margin Maximizer”, GMM). While classical max-margin classifiers minimize the $2$-norm of the parameter vector subject to linearly separating the data, GMM minimizes any arbitrary convex function of the parameter vector. We provide a precise analysis of the performance of GMM via the solution of a system of nonlinear equations. We also provide a detailed study for three special cases: ($1$) $\ell_2$-GMM that is the max-margin classifier, ($2$) $\ell_1$-GMM which encourages sparsity, and ($3$) $\ell_{\infty}$-GMM which is often used when the parameter vector has binary entries. Our theoretical results are validated by extensive simulation results across a range of parameter values, problem instances, and model structures.
\end{abstract}
\section{Introduction}
\label{sec:intro}
Machine learning models have been very successful in many applications,  ranging from spam detection, face and pattern recognition, to the analysis of genome sequencing and financial markets. However, despite this indisputable success, our knowledge on why the various machine learning methods exhibit the performances they do is still at a very early stage. To make this gap between the theory and the practice narrower, researchers have recently begun to revisit simple machine learning models with the hope that understanding their performance will lead the way to understanding the performance of more complex machine learning methods.\\
More specifically, studies on the performance of diffrent classifiers for binary classification dates back to the seminal work of Vapnik in the 1980's~\cite{vapnik1982estimation}. In an effort to find the ''optimal'' hyperplane that separates the data, he presented an upper bound on the test error which is inversely proportional to the margin (minimum distance of the datapoints to the separating hyperplane), and concluded that the max-margin classifier is indeed the desired classifier.  It has also been observed that to construct such optimal hyperplanes one only has to take into
acconnt a small amount of the training data, the so-called support vectors~\cite{cortes1995support}. \\
In this paper, we challenge the conventional wisdom by showing that when the underlying parameter has certain structure one can come up with classifiers that outperform the max-margin classifier. We introduce the {\bf{G}}eneralized {\bf{M}}argin {\bf{M}}aximizer ({\bf{GMM}}) which takes into account the structure of the underlying parameter as well as the minimimum distance of the datapoints to the separating hyperplane. We provide sharp asymptotic results on various performance measures (such as the generalization error) of GMM and show that an appropriate choice of the potential function can in fact improve the resulting estimator. 
\subsection{Prior work}
There have been many recent attempts to understand the generalization behavior of simple machine learning models~\cite{bartlett2019benign, mei2019generalization, xu2019many, belkin2018overfitting, hastie2019surprises}. Most of these studies focus on the least-squares/ridge regression, where the loss function is the squarred $\ell_2$-norm, and derive sharp asymptotics on the performance of the estimator. In particular, in~\cite{hastie2019surprises, kini2020analytic} the authors have shown that the minimum-norm least square solution demonstrates the so-called "double-descent" behavior~\cite{belkin2019reconciling}. \\
A more recent line of research studies the generalization performance of gradient descent (GD) for binary classification. It has been shown~\cite{soudry2018implicit}) that for a separable dataset, GD (when applied on the logistic loss) converges in direction to the max-margin classifier (a.k.a. hard-margin SVM). The performance of max-margin classifier has been recently analyzed in two independent works~\cite{montanari2019generalization, deng2019model}.
\subsection{Summary of contributions}
Inspired by the recent results in understanding the performance of the max-margin classifier, in this paper we introduce and study a more general framework. We assume the underlying parameters possess certain structure (e.g. sparse) and introduce the generalized margin maximizer (GMM) as the solution of a convex optimization problem whose objective function encourages the structure.\\
We analyze the performance of GMM in the high-dimensional regime where both the number of parameters, $p$, and the number of samples $n$ grows, and analyze the asymptotic performance as a function of the overparameterization ratio $\delta:=\frac{p}{n}>0$. First, we provide the phase transition condition for the separability of data (i.e., derive the exact value of $\delta^*$ such that the data is separable for all $\delta>\delta^*$\footnote{Concurrent to the submission of this paper, a similar phase transition has been demonstrated in~\cite{kini2020analytic} for a somewhat different model.}.) Consequently, we analyze the performance in the interpolating regime ($\delta>\delta^*$). To the best of our knowledge, this is the first theoretical result that provides sharp asymptotics on the performance of GMM classifiers on separable data.
For our analysis, we exploit the  {\bf{C}}onvex {\bf{G}}aussian {\bf{M}}in-max {\bf{T}}heorem ({\bf{CGMT}})~\cite{stojnic2013framework, thrampoulidis2015regularized} which is a strengthened version of a classical Gaussian comparison inequality due to Gordon~\cite{gordon1985some}.  This framework
replaces the original optimization with another optimization problem that has a similar performance, yet is  much simpler to analyze as it becomes nearly separable. Previously, the CGMT has been successfully applied to derive the precise performance in a number of applications such as regularized M-estimators~\cite{thrampoulidis2018precise}, analysis of the generalized lasso~\cite{miolane2018distribution, thrampoulidis2015regularized}, data detection in massive MIMO~\cite{abbasi2019performance, atitallah2017ber, thrampoulidis2019simple},  and PhaseMax in phase retrieval~\cite{dhifallah2018phase, salehi2018learning, salehi2018precise}.\\ More recently, this framework has been employed in a series of works by multiple groups of researchers to characterize the performance of the logistic loss minimizer in binary classification~\cite{salehi2019impact, taheri2019sharp}. Furthermore, in an analogous avenue of research, the CGMT framework has been utilized to study the generalization behavior of the gradient descent algorithm in the interpolating regime, where there exists a (nonempty) set of parameters that perfectly fit the training data~\cite{montanari2019generalization, deng2019model}.\\
The organization of the paper is as follows: In Section~\ref{sec:prelim} we mathematically introduce the problem and the notations used in the paper. Section~\ref{sec:main} contains the main results of the paper where we first provide the asymptotic phase transition on the separability of the data, and then in our main theorem, we present the precise performance analysis of GMM, which then be used to compute the generalization error. We investigate our theoretical findings for three specific cases of potential functions in Section~\ref{sec:structured}. Numerical simulations for the genralization error of the GMM classifiers are presented in Section~\ref{sec:num_sim}.
We should note that most technical derivations of the results presented in the paper are deferred to the Appendix. 
\section{Preliminaries}
\label{sec:prelim}
\subsection{Notations}
Here, we gather the basic notations that are used throughout the paper. $X\sim p_{X}$ denotes that the random variable $X$ has a density $p_X$. $\normal{(\bm\mu,\bm\Sigma)}$ denotes the multivariate Gaussian distribution with mean $\bm \mu$, and covariance $\bm \Sigma$, and $\Rad(p)$, for $p\in [0,1]$, is the symmetric bernouli random variable which takes the value $+1$ with ptobability $p$, and $-1$ with probability $1-p$. $\overset{D}\rightarrow$, and $\overset{P}\rightarrow$ represent convergence in distribution and in probability, respectively. Bold lower letters are reserved for vectors, and upper letters are for matrices. $\mathbf 1_d$, and $\Id_d$ respectively represent the all-one vector and the identity matrix in dimension $d$. For a vector $\v$, $v_i$ denotes its $i$-th entry, and $\lnorm{\v}_p$ (for
$p \geq 1$), is its $\ell_p$ norm, where we remove the subscript when $p = 2$. For a scalar $t\in \R$, $(t)_+ = \max(t,0)$ denotes its positive part, and $\sgn(t)$ indicates its sign. \\
A function $f:\R^d\rightarrow \R$ is called (invariantly) separable, when for all $\w \in \R^d$, $f(\w) = \sum_{i=1}^d\tilde f(w_i)$, for a real-valued function $\tilde f$. For a function $\Phi:\mathbb R^d\rightarrow \mathbb R$, the Moreau envelope associated with $\Phi(\cdot)$ is defined as,
\begin{equation}
\label{eq:Moreau}
M_{\Phi}(\mathbf v, t) = \min_{\mathbf x\in \mathbb R^d}~~\frac{1}{2t}||\mathbf v-\mathbf x||^2+\Phi(\mathbf x)~,
\end{equation}
and the proximal operator is the solution to this optimization, i.e.,
\begin{equation}
\text{Prox}_{t\Phi(\cdot)}(\mathbf v) = \arg\min_{\mathbf x\in \mathbb R^d}~~\frac{1}{2t}||\mathbf v-\mathbf x||^2+\Phi(\mathbf x)~.
\end{equation}
Finally, the function $\Phi(\cdot)$ is said to be locally-Lipschitz if for any $M>0$, there exists a constant $L_M$, such that,
\begin{equation}
    \forall \u,\v \in [-M,+M]^d,~~|\Phi(\u) - \Phi(\v)|\leq L_{M}\lnorm{\u-\v}.
\end{equation}
\subsection{Mathematical setup}
\label{sec:setup}
We consider the problem of binary classification, having a set of training data, $\mathcal D=\{(\x_i,y_i)\}_{i=1}^{n}$, where each of the sample points consists of a $p$-dimensional feature vector, $\x_i$, and a binary label, $y_i\in\{\pm 1\}$. We assume that the dataset $\mathcal D$ is generated from a logistic-type model with the underlying parameter $\w^\star\in \R^p$. This means that
\begin{equation}
    y_i\sim \Rad(\rho(\x_i^T\w^\star))~,\quad i=1,\dots,n~,
\end{equation}
where $\rho: \R\rightarrow [0,1]$ is a non-decreasing function and is often referred to as the link function. A commonly-used instance of the link function is the standard logistic function defined as $\rho(t):=\frac{1}{1+e^{-t}}$. \\
When $n/p$ is sufficiently large, i.e., when we have access to a sufficiently large number of samples, the maximum-likelihood estimator( $\hat{\mathbf w}_{ML}$) is well-defined. In such settings, the MLE is often the estimator of choice due to its desirable properties in the classical statistics. Sur and Cand{\`{e}}s~\cite{sur2018modern} have recently studied the performance of the MLE in logistic regression in the high-dimensional regime, where the number of observations and parameters are comparable, and show, among other things, that the maximum likelihood estimator is biased. Their results have been extended to regularized logistic regression~\cite{salehi2019impact}, assuming some prior knowledge on the structure of the data. In particular, it has been observed that, when the regularization parameter is tuned properly, the regularized logistic regression can outperform the MLE.\\
Inspired by the recent results on analyzing the generalization error of machine learning models, in this paper, we study the generalization error of binary classification, in a regime of parameters known as the interpolating regime. Here, the assumption is that there exists a parameter vector that can perfectly fit (interpolate) the data, i.e.,
\begin{equation}
    \exists \w_0~\text{ s.t. }~\sgn{(\w_0^T\x_i)} = y_i, ~~\text{for }i=1,2,\ldots,n.
\end{equation}
Let $\mathcal W$ denote the set of all the parameters that interpolate the data. 
\begin{equation}
    \mathcal W = \{\w\in\R^p:\sgn(\w^T\x_i)=y_i~,~~\text{for }1\leq i\leq n.\}.
\end{equation}
It has been observed that in many machine learning tasks, the iterative solvers that minimize the loss function often converge to one of the points in the set $\mathcal W$ (the training error converges to zero).  Therefore,  one can (qualitatively) pose the following important (yet still mysterious) question:

~~~~\fbox{\parbox{0.42\textwidth}{%
Which point(s) in $\mathcal W$ is (are)  ''better'' estimator(s)\\
of the actual parameter, $\w^\star$?
}}

In an attempt to find an answer to this question, we focus on the simple (yet fundamental) model of binary clasification. We assume that the underlying parameter, $\w^\star$ possesses certain structure (sparse, low-rank, block-sparse, etc.), and consider a locally-Lipschitz and convex function $\psi:\R^p\rightarrow \R$ which encourages this structure. We introduce the \emph{Generalized Margin Maximizer} (GMM) as the solution to the following optimization:
\begin{equation}
    \label{eq:opt_main}
    \begin{aligned}
    &&&\min_{\w \in \R^p}~~~\psi(\w)\\
&&&~~~\text{ s.t. }y_i(\x_i^T\w)\geq 1, ~~\text{for }1\leq i\leq n.
    \end{aligned}
\end{equation}
It is worth noting that the condition on the separability of the dataset is crucial for the optimization program~\eqref{eq:opt_main} to have a feasible point.
\begin{rem}
It can be shown that when $\psi(\cdot)$ is absolutely scalable\footnote{A function $f:\R^d\rightarrow \R$ is absolutely scalable when, $$\forall \v\in \R^d, \forall \alpha\in \R,~~f(\alpha\v) = |\alpha|f(\v).$$ ~~~~~All $\ell_p$ norms, for example, are absolutely scalable.}, the GMM can be found by solving the following equivalent optimization program,
\begin{equation}
    \label{eq:gen_margin_equivalent}
    \max_{\w \in \R^d}\frac{\psi(\w)}{\underset{1\leq i \leq n}\min y_i(\x_i^T\w)} = \max_{\w \in \R^d} \frac{\lnorm{\w}}{\underset{1\leq i \leq n}\min y_i(\x_i^T\w)}\times \frac{\psi(\w)}{\lnorm{\w}}.
\end{equation}
The first multiplicative term on the right indicates the margin associated with the separator $\w$, and the second term, $\frac{\psi(\w)}{\lnorm{\w}}$ takes into account the structure of the model. Hence, we refer to the objective function in  the optimization~\eqref{eq:gen_margin_equivalent} as the generalized margin, and the solution to this optimization is called the generalized margin maximizer (GMM).
\end{rem}
In this paper, we study
the linear asymptotic regime in which the problem dimensions $p,~n$ grow to infinity at a proportional rate, $\delta:=\frac{p}{n}>0$. Our main result characterizes the performance of  the solution of~\eqref{eq:opt_main}, $\hat \w$, in terms of the ratio, $\delta$, and
the signal strength, $\kappa := \frac{\lnorm{\w^\star}}{\sqrt{p}}$. We assume that the datapoints, $\{\x_i\}_{i=1}^n$, are drawn independently from the Gaussian distribution. Our main result characterizes the
performance of the resulting estimator through the solution of a system of five nonlinear equations
with five unknowns. In particular, as an application of our main result, we can accurately predict the generalization error of the resulting estimator.

\section{Main Results}
\label{sec:main}
In this section, we present the main results of the paper, that is the characterization of the performance of the generalized margin maximizers. Our results are represented in terms of a summary functional, $c_t(\cdot,\cdot)$, which incorporates the informaiton about the underlying model. 
\begin{defn}
\label{def:c_kappa}
    For the parameter $t>0$, the function  $c_t:\R\times\R_+\rightarrow \R_+$ is defined as,
    \begin{equation}
        \label{eq:def_c_kappa}
        c_t(s,r) = \Expect{\big[(1-tsZ_1Y-rZ_2)_{+}^{2}\big]},
    \end{equation}
    where $Z_1,Z_2\overset{\text{i.i.d.}}\sim \normal(0,1)$, and $Y\sim \Rad(\rho(t Z_1))$.
\end{defn}
\subsection{Asymptotic phase transition}\label{sec:phase_transition}
Here, we provide the necessary and sufficient condition for the separability of the data. 
\begin{theorem}[Phase transition]\label{thm:phase_transition}
Consider the generalized max margin optimization defined in Section~\ref{sec:setup}. As $n,p\rightarrow \infty$ at a fixed overparameterization ratio $\delta:=\frac{p}{n}\in(0,\infty)$, this optimization program (almost surely) has a solution (or  equivalenty, the set $\mathcal W$ is nonempty) if and only if,
\begin{equation}
    \label{eq:phase_transition}
    \delta>\delta^*=\delta^*(\kappa) := \underset{s,r\geq 0}\inf~\frac{c_{\kappa}(s,r)}{r^2}~.
\end{equation}
\end{theorem}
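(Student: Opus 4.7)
The plan is to recast separability as the vanishing of a squared-hinge minimization, reduce that minimization to a scalar auxiliary optimization (AO) via the CGMT, and identify the threshold from the zero-locus of the AO. I would first observe that $\mathcal W$ is nonempty if and only if
\begin{equation*}
\Psi_n := \min_{\w \in \R^p}\,\frac{1}{n}\sum_{i=1}^n (1 - y_i \x_i^T \w)_+^2 = 0,
\end{equation*}
since a strict separator can be rescaled to margin $\geq 1$, and any minimizer achieving $\Psi_n=0$ is a feasible point of the GMM program. It then suffices to show that $\Psi_n\overset{P}{\to}0$ exactly when $\delta>\delta^*(\kappa)$.

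Next, I would decompose $\w=(\alpha/\lnorm{\w^\star})\w^\star+\w_\perp$ and let $Z_{1,i}\sim\normal(0,1)$ denote the normalized signal projection, so that $y_i\sim\Rad(\rho(\kappa Z_{1,i}))$ and the perpendicular features form an i.i.d.\ Gaussian matrix $\mathbf G$ independent of $(\mathbf Z_1,\y)$ with $\mathbf D_y\mathbf G\stackrel{d}{=}\mathbf G$. Using $(t)_+^2=\max_{u\geq 0}(2ut-u^2)$,
\begin{equation*}
\Psi_n=\min_{\alpha,\w_\perp}\max_{\u\geq 0}\frac{1}{n}\bigl[\,2\u^T\mathbf a(\alpha)-2\u^T\mathbf D_y\mathbf G\w_\perp-\lnorm{\u}^2\,\bigr],
\end{equation*}
where $\mathbf a(\alpha)_i:=1-(\alpha/\sqrt{p})y_iZ_{1,i}$. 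Truncating the $\w$-domain (the inner maximizer being automatically bounded by strict concavity in $\u$), the CGMT replaces $\u^T\mathbf D_y\mathbf G\w_\perp$ by $\tfrac{1}{\sqrt p}(\lnorm{\w_\perp}\g^T\u+\lnorm{\u}\h^T\w_\perp)$, with independent $\g\sim\normal(\mathbf 0,\Id_n)$ and $\h\sim\normal(\mathbf 0,\Id_{p-1})$, producing the AO.

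The next step is to scalarize: aligning $\w_\perp$ with $\h$, maximizing over the direction of $\u\geq 0$ (which turns $\u^T\v$ into $\lnorm{\v_+}$), and closing the scalar max over $\lnorm{\u}$ yields, with $s:=\alpha/(\kappa\sqrt{p})$ and $r:=\lnorm{\w_\perp}/\sqrt{p}$,
\begin{equation*}
\Psi_n^{\mathrm{AO}}=\min_{s,\,r\geq 0}\frac{1}{n}\bigl(\,\lnorm{(\mathbf a(\alpha)-r\g)_+}-r\lnorm{\h}\,\bigr)_+^2.
\end{equation*}
Concentration yields $\tfrac{1}{n}\lnorm{(\mathbf a(\alpha)-r\g)_+}^2\overset{P}{\to}c_\kappa(s,r)$ (directly matching Definition~\ref{def:c_kappa}) and $\lnorm{\h}^2/p\overset{P}{\to}1$; convexity of the AO in $(s,r)$ upgrades pointwise to uniform convergence on compacta, giving
\begin{equation*}
\Psi_n^{\mathrm{AO}}\overset{P}{\to}\inf_{s,r\geq 0}\bigl(\sqrt{c_\kappa(s,r)}-r\sqrt{\delta}\bigr)_+^2.
\end{equation*}
This infimum equals zero if and only if $c_\kappa(s,r)\leq \delta r^2$ for some $(s,r)$, i.e., iff $\delta\geq \delta^*(\kappa)$. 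The CGMT transfers the equivalence to $\Psi_n$ itself, and the reformulation step then yields the claim.

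The hard part will be the non-compactness in $\w$: when $\mathcal W\neq\emptyset$, the infimum $\Psi_n=0$ is attained only as $\lnorm{\w}\to\infty$, whereas the CGMT requires a bounded domain. I would handle this with a truncate-and-take-limit argument (truncate to $\lnorm{\w}\leq R\sqrt{p}$, apply the CGMT, then send $R\to\infty$), using monotonicity of the truncated $\Psi_n$ in $R$ together with the observation that the AO threshold $\delta^*$ stabilizes once $R$ is large. A secondary point is the restriction $s\geq 0$ in the infimum, which follows from the symmetry $c_\kappa(-s,r)=c_\kappa(s,r)$ when $\rho$ is antisymmetric about $1/2$ (as in the logistic link), and can otherwise be absorbed by extending the infimum to $s\in\R$.
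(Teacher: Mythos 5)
Your reformulation and AO computation are sound: the decomposition along $\w^\star$, the scalarization to $\frac{1}{n}\bigl(\lnorm{(\mathbf a-r\g)_+}-r\lnorm{\h}\bigr)_+^2$, and the limit $\inf_{s,r\geq 0}\bigl(\sqrt{c_\kappa(s,r)}-r\sqrt{\delta}\bigr)_+^2$, which vanishes exactly on $\{\delta\geq\delta^*\}$, are all correct. The route is also genuinely different from the paper's: the paper dualizes the constrained GMM program itself, obtains a constrained auxiliary problem (Lemma~\ref{lem:define_aux}), and reads the phase transition off as the feasibility condition $\frac{1}{n}\lnorm{\P^\perp\h}^2\gtrless\delta^*$ of that AO, whereas you test separability through the \emph{value} of an unconstrained squared-hinge minimization.

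There is, however, a genuine gap in the sentence ``it then suffices to show that $\Psi_n\overset{P}{\to}0$ exactly when $\delta>\delta^*$.'' It does not suffice. Separability requires $\Psi_n$ to be exactly zero (or at least smaller than $1/n$), and convergence in probability to zero is perfectly compatible with $\Psi_n>0$ for every $n$. The difficulty is intrinsic to your choice of order parameter: in the separable regime it concentrates \emph{at} the degenerate value $0$, and value concentration alone cannot tell you on which side of $0$ you sit. (The infeasible direction is fine: there $\Psi_n$ concentrates at a strictly positive constant, so $\Prob{\Psi_n=0}\to 0$ and non-separability follows.) The gap is repairable, but the repair needs two observations absent from your proposal: (i) $\Psi_n<1/n$ already implies strict separability, since then $\sum_i(1-y_i\x_i^T\w)_+^2<1$ at a near-minimizer, hence every term is below $1$ and $y_i\x_i^T\w>0$ for all $i$; and (ii) when $\delta>\delta^*$ your AO value is not merely small but \emph{exactly} zero with probability tending to one, because at a fixed $(s_0,r_0)$ with $c_\kappa(s_0,r_0)<\delta r_0^2$ the quantity $\lnorm{(\mathbf a-r_0\g)_+}-r_0\lnorm{\h}$ is negative w.h.p.; consequently the non-asymptotic CGMT inequality $\Prob{\Psi_n>t}\leq 2\,\Prob{\Psi_n^{\mathrm{AO}}\geq t}$ can be invoked at the $n$-dependent threshold $t=1/(2n)$ and still be bounded by $2\,\Prob{\Psi_n^{\mathrm{AO}}>0}\to 0$. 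Without (i) and (ii) the chain ``AO value $\to 0$ $\Rightarrow$ PO value $\to 0$ $\Rightarrow$ separable'' breaks at the last arrow. Your compactness concern is real but secondary, and your truncate-and-limit remedy is essentially what the paper does.
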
 
\begin{rem}
Theorem~\ref{thm:phase_transition} indicates the necessary and sufficient condition for the existense of GMM. It is worth mentioning that this condition, which is simply the condition on separability of the dataset $\mathcal D$, does not depend on the choice of the potential function $\psi(\cdot)$.
\end{rem}
\begin{figure}[t]
\vskip 0.2in
\begin{center}
\centerline{\includegraphics[width=260pt]{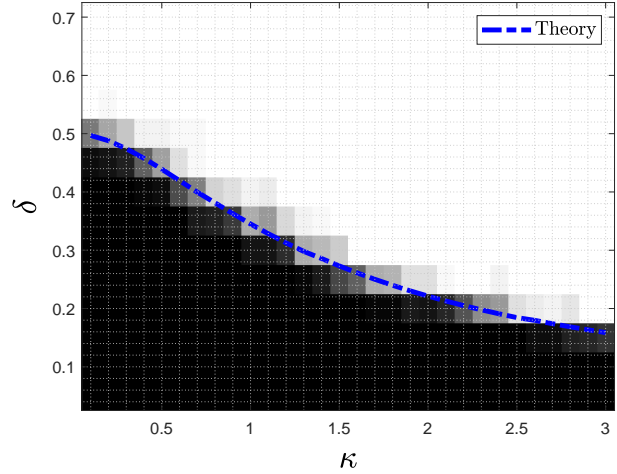}}
\caption{The phase transition, $\delta^*$, for the separability of the dataset, where the feature vector, $\x_i$ is drawn from the Gaussian distribution, $\normal(\mathbf 0, \frac{1}{p}\Id_p)$, and the labels are $y_i\sim \Rad\big(\rho(\x_i^T\w^\star)\big)$, for $\rho(z) = \frac{e^t}{e^t + e^{-t}}$. The empirical result is the average over $20$ trials with $p=150$, and the theoretical results are from Theorem~\ref{thm:phase_transition}.}
\label{fig:phase}
\end{center}
\vskip -0.2in
\end{figure}
\begin{rem}
The phase transition~\eqref{eq:phase_transition}, is valid for any link function $\rho(\cdot)$. This generalizes the former results in~\cite{candes2018phase}. Note that the summary functional, $c_{\kappa}(\cdot,\cdot)$, contains the choice of the link function and can be computed numerically. 
\end{rem}
The following lemma explains the behavior of $\delta^*$ as $\kappa$ varies.
\begin{lem}\label{lem:one}
$\delta^*$ is a decreasing function of $\kappa$, with $\delta^*(0) = \frac{1}{2}$ and $\lim_{\kappa\rightarrow +\infty}\delta^*(\kappa) =0.$
\end{lem}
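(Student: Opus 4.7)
The plan is to establish monotonicity and the two endpoint values separately, using a single reparametrization of the defining infimum. For $\kappa>0$ the change of variable $\tau=\kappa s$ rewrites~\eqref{eq:phase_transition} as
\[
\delta^*(\kappa)\;=\;\inf_{\tau,r\geq 0}\,\frac{C(\kappa,\tau,r)}{r^2},\qquad C(\kappa,\tau,r):=\E\bigl[(1-\tau Z_1Y-rZ_2)_+^2\bigr],
\]
isolating the $\kappa$-dependence in the conditional law $Y\mid Z_1\sim\Rad(\rho(\kappa Z_1))$ on a $\kappa$-independent feasible set.

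For monotonicity I would show that $\kappa\mapsto C(\kappa,\tau,r)$ is non-increasing for each fixed $(\tau,r)$. Conditioning on $(Z_1,Z_2)$ and integrating over $Y$ gives
\[
C(\kappa,\tau,r)\;=\;\E\bigl[(1+\tau Z_1-rZ_2)_+^2\bigr]\;+\;\E\bigl[\rho(\kappa Z_1)\,S(Z_1,Z_2)\bigr],
\]
with $S(Z_1,Z_2):=(1-\tau Z_1-rZ_2)_+^2-(1+\tau Z_1-rZ_2)_+^2$. Since $x\mapsto x_+^2$ is non-decreasing, the sign of $S$ is opposite that of $\tau Z_1$, so $Z_1\,S(Z_1,Z_2)\leq 0$ pointwise. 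For $\kappa_2>\kappa_1>0$, monotonicity of $\rho$ forces $\rho(\kappa_2 Z_1)-\rho(\kappa_1 Z_1)$ to share the sign of $Z_1$, whence $(\rho(\kappa_2 Z_1)-\rho(\kappa_1 Z_1))S(Z_1,Z_2)\leq 0$ a.s., giving $C(\kappa_2,\tau,r)\leq C(\kappa_1,\tau,r)$ and hence $\delta^*(\kappa_2)\leq\delta^*(\kappa_1)$. The boundary $\kappa_1=0$ is handled by observing that $C(\kappa,0,r)=\E[(1-rZ_2)_+^2]=c_0(s,r)$ for every $\kappa\geq 0$, so $\delta^*(\kappa)\leq\delta^*(0)$.

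For the endpoint $\delta^*(0)=\tfrac12$, $c_0(s,r)=\E[(1-rZ_2)_+^2]$ is independent of $s$; with $u=1/r$, $c_0(s,r)/r^2=\E[(u-Z_2)_+^2]$ is non-decreasing in $u\geq 0$ (derivative $2\,\E[(u-Z_2)_+]\geq 0$) and tends to $\E[Z_2^2\mathbf{1}\{Z_2<0\}]=\tfrac12$ as $u\to 0^+$. For the limit at infinity, I would fix $\epsilon>0$, choose $\tau_\epsilon$ with $\E[(1-\tau_\epsilon|Z_1|-Z_2)_+^2]<\epsilon/2$ (possible by dominated convergence with envelope $(1+|Z_2|)^2$), and take $s=\tau_\epsilon/\kappa$ and $r=1$ in the reparametrized infimum to get $\delta^*(\kappa)\leq\E[(1-\tau_\epsilon Z_1Y-Z_2)_+^2]$. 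Under the standard link assumption $\rho(+\infty)=1$ and $\rho(-\infty)=0$, $Y\to\sgn(Z_1)$ in probability as $\kappa\to\infty$, so $Z_1Y\to|Z_1|$ in probability, and a second dominated-convergence argument (with $\kappa$-independent envelope $(1+\tau_\epsilon|Z_1|+|Z_2|)^2$) drives this expectation to $\E[(1-\tau_\epsilon|Z_1|-Z_2)_+^2]<\epsilon/2$. Hence $\delta^*(\kappa)<\epsilon$ for all sufficiently large $\kappa$.

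The part requiring the most care is the limit at infinity, which coordinates two limit operations: $\tau_\epsilon\to\infty$ first makes the clean bound $\E[(1-\tau|Z_1|-Z_2)_+^2]$ small, while $\kappa\to\infty$ at fixed $\tau_\epsilon$ replaces $Z_1Y$ by $|Z_1|$ via $\rho(\pm\infty)\in\{0,1\}$; both steps rely on dominated convergence with $\kappa$-independent envelopes. The monotonicity and the value at $\kappa=0$ then reduce to the monotonicity of $x\mapsto x_+^2$ together with a short Gaussian computation.
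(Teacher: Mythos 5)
Your proof is correct, and it is genuinely different from what the paper does: the paper never gives a formal proof of Lemma~\ref{lem:one} at all. It only offers an intuitive verification --- citing \cite{cover1965geometrical} for the value $\delta^*(0)=\tfrac12$ (random labels) and arguing that $\kappa\to\infty$ makes the labels noiseless, $y_i=\sgn(\x_i^T\w^\star)$, so the data are always separable --- and it gives no argument whatsoever for monotonicity in $\kappa$. You instead work directly from the variational formula~\eqref{eq:phase_transition}: the substitution $\tau=\kappa s$ puts the infimum over a $\kappa$-independent feasible set, the conditional integration over $Y$ isolates the $\kappa$-dependence in the term $\E[\rho(\kappa Z_1)S(Z_1,Z_2)]$ with $Z_1 S\le 0$, and the two endpoints follow from a one-line Gaussian computation ($\inf_u\E[(u-Z_2)_+^2]=\E[Z_2^2\mathbf{1}\{Z_2<0\}]=\tfrac12$) and a two-stage dominated-convergence argument. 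What your route buys is a self-contained analytic proof of all three claims from $c_\kappa$ alone, plus an explicit identification of the hypothesis the paper uses only implicitly, namely $\rho(+\infty)=1$ and $\rho(-\infty)=0$ (without which, e.g.\ for constant $\rho$, the limit $\delta^*(\kappa)\to 0$ is false); what the paper's heuristic buys is geometric transparency and consistency with the classical Cover phase transition, but as written it is not a proof. Two cosmetic remarks: your argument establishes that $\delta^*$ is non-increasing, which is what the paper means by ``decreasing''; and in the $\kappa\to\infty$ step it is worth saying explicitly that $\Prob{Y=\sgn(Z_1)\mid Z_1}=\rho(\kappa Z_1)$ for $Z_1>0$ and $1-\rho(\kappa Z_1)$ for $Z_1<0$, both of which tend to $1$ almost surely since $Z_1\neq 0$ a.s., which is exactly the ``$Y\to\sgn(Z_1)$ in probability'' claim you invoke.
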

The result of Lemma \ref{lem:one} can be intuitively verified. Recall that $\kappa=\frac{\|\mathbf w^\star\|}{\sqrt p}$ and $y_i\sim \Rad(\rho(\x_i^T\w^\star))$. Therefore, $\kappa\rightarrow\infty$ translates to having $y_i=\sgn(\x_i^T\w^\star)$. In this case our training data is always separable for any number of observations $n$. Besides, the case of $\kappa=0$ corresponds to having random labels assigned to feature vectors $\mathbf x_i$. \cite{cover1965geometrical} showed that in this case, as $p\rightarrow \infty$, $\delta>0.5$ is the necessary and sufficient condition for the separability of the dataset.\\
Figure~\ref{fig:phase} provides a comparison between the theoretical result in Theorem~\ref{thm:phase_transition}, and the empirical results derived from numerical simulations for $p=150$ and $20$ trials. As seen in this plot, the theory matches well with the empirical simulations. 
\subsection{A nonlinear system of equations}
\label{sec:nonlin_system}
Our main result in Section~\ref{sec:main_thm} precisely characterizes the performance of GMM in terms of a system of $5$ nonlinear equations with $5$ unknowns, $(\alpha,\sigma,\beta,\gamma,\tau),$ defined as follows,
\begin{equation}
\label{eq:nonlinsys}
\begin{cases}
\frac{1}{p}\Expect\big[ {\w^\star}^T\mathbf P\big] = \alpha\kappa^2,\\
\frac{1}{p}\Expect\big[ \h^T\mathbf P\big] = \sqrt{\frac{c_{\kappa}(\alpha,\sigma)}{\delta}},\\
\frac{1}{p}\Expect\lnorm{\mathbf P}^2 = \alpha^2\kappa^2 + \sigma^2,\\
\frac{\partial c_{\kappa}(\alpha,\sigma)}{\partial \alpha} = \frac{2\kappa^2\gamma}{\beta}\sqrt{c_{\kappa}(\alpha,\sigma)},\\
\frac{\partial c_{\kappa}(\alpha,\sigma)}{\partial \sigma} = \frac{2\sqrt{c_{\kappa}(\alpha,\sigma)}}{\beta \tau},
\end{cases}
\end{equation}
where $\mathbf P$ is defined as,
\begin{equation}
    \mathbf P = \Prox_{\sigma\tau\psi(\cdot)}\big((\alpha-\sigma\tau\gamma)\w^\star+\beta \sigma\tau\sqrt\delta \h\big)
\end{equation}
\begin{rem}
The first three equations in the nonlinear system~\eqref{eq:nonlinsys} capture the role of the potential function, via its proximal operator. When $\psi(\cdot)$ is separable, these functions can further be reduced to the proximal operator of a real-valued function. For instance, when $\psi(\cdot)=\lnorm{\cdot}_1$, the proximal operator is simply equivalent to applying the well known shrinkage (defined as $\eta(x,t)=\frac{x}{|x|}(|x|-t)_{+}$) on each entry. For more information on the proximal operators, please refer to~\cite{parikh2014proximal}.
\end{rem}
\subsection{Asymptotic performance of GMM}
\label{sec:main_thm}
We are now ready to present the main result of the paper. Theorem~\ref{thm:main} chracaterizes the asymptotic behavior of GMM, that is the solution to the optimization program~\eqref{eq:opt_main}. It connects the performance of GMM to the solution of the nonlinear system of equations~\eqref{eq:nonlinsys}, and {\underline{informally}} states that,
\begin{equation}
    \hat \w \overset{D}\rightarrow \Gamma(\w^\star, \h),\text{ as }p\rightarrow \infty,
\end{equation}
where $\h \in \R^p$ has standard normal entries, and $\Gamma:\R^p\times \R^p\rightarrow \R^p$ is defined as,
\begin{equation}
\label{eq:Gamma_def}
    \Gamma(\v_1,\v_2) = \Prox_{\bar\sigma\bar\tau\psi(\cdot)}\big((\bar\alpha-\bar\sigma\bar\tau\bar\gamma)\v_1+\bar\beta \bar\sigma\bar\tau\sqrt\delta \v_2\big),
\end{equation}
where $(\bar\alpha,\bar\sigma, \bar \beta, \bar \gamma, \bar \tau)$ is the solution to the nonlinear system~\eqref{eq:nonlinsys}.
\begin{theorem}
\label{thm:main}
Let $\hat \w$ be the solution of the GMM optimization~\eqref{eq:opt_main}, where for $i=1,2,\ldots,n$, $\x_i$ has the multivariate Gaussian distribution $\normal(\mathbf 0, \frac{1}{p}\Id_p)$, and $y_i \sim \Rad(\rho(\x_i^T\w^\star))$, and $\w^\star$ is drawn from a distribution $\Pi$ with $\kappa =\frac {\lnorm{\w^\star}}{\sqrt{p}}$. As $n,p\rightarrow \infty$ at a fixed overparameterization ratio $\delta=\frac{p}{n}>\delta^*(\kappa)$, the nonlinear system~\eqref{eq:nonlinsys} has a unique solution $(\bar\alpha,\bar\sigma, \bar \beta, \bar \gamma, \bar \tau)$. Furthermore, for any locally-Lipschitz function $F:\R^p\times \R^p\rightarrow \R$, we have,
\begin{equation}
\label{eq:main_perf_measure}
    F(\hat \w, \w^\star)\overset{P}\rightarrow \Expect[F(\Gamma(\w,\h),\w)],
\end{equation}
where $\h\in \R^p$ has standard normal entries, $\w\sim \Pi$ is independent of $\h$, and the function $\Gamma(\cdot,\cdot)$ is defined in~\eqref{eq:Gamma_def}.
\end{theorem}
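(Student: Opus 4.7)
My plan is to apply the Convex Gaussian Min-max Theorem (CGMT) to the primary optimization~\eqref{eq:opt_main}, after first decoupling the labels $y_i$ from the features $\x_i$ through an orthogonal decomposition. Writing $\bar\w^\star:=\w^\star/\lnorm{\w^\star}$ and decomposing $\x_i=(\x_i^T\bar\w^\star)\bar\w^\star+\tilde\x_i$, the component $\tilde\x_i$ is Gaussian on $\{\bar\w^\star\}^\perp$ and independent of $\x_i^T\w^\star=\kappa Z_{1,i}$, hence of $y_i$. Parameterizing $\w=\alpha\w^\star+\w_\perp$ with $\w_\perp\perp\w^\star$ and using that $y_i\tilde\x_i\overset{D}{=}\tilde\x_i$ by symmetry, each constraint of~\eqref{eq:opt_main} rewrites as $\alpha\kappa\,y_iZ_{1,i}+\g_i^T\w_\perp/\sqrt p\ge 1$, with the $\g_i\in\R^p$ essentially standard Gaussian and \emph{independent} of $(y_i,Z_{1,i})$. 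Introducing a nonnegative multiplier $\u\in\R^n_+$ for these constraints and truncating $(\alpha,\lnorm{\w_\perp},\lnorm{\u})$ to compact boxes (justified a posteriori via coercivity of $\psi$ together with the quantitative feasibility of Theorem~\ref{thm:phase_transition}), the resulting min--max contains only a Gaussian bilinear term $\u^TG\w_\perp/\sqrt p$, to which CGMT applies.

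\medskip

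CGMT produces an auxiliary optimization (AO) in which $\u^TG\w_\perp/\sqrt p$ is replaced by $(\lnorm{\w_\perp}\g^T\u+\lnorm{\u}\h^T\w_\perp)/\sqrt p$ for independent standard Gaussians $\g\in\R^n$ and $\h\in\R^p$. I would then scalarize by setting $\sigma:=\lnorm{\w_\perp}/\sqrt p$ and $\beta:=\lnorm{\u}/\sqrt n$. With these norms fixed, coordinatewise maximization of the $\u$-linear part on $\u\ge\mathbf 0$ collects into $\beta\sqrt n\cdot\lnorm{(\omega)_+}$ for the vector $\omega_i=1-\alpha\kappa y_iZ_{1,i}-\sigma g_i$, and the strong law on the i.i.d.\ triples $(g_i,Z_{1,i},y_i)$ converts $\lnorm{(\omega)_+}^2/n$ into exactly $c_\kappa(\alpha,\sigma)$ by Definition~\ref{def:c_kappa}. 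On the primal side, introducing Lagrange multipliers $\gamma$ (for ${\w^\star}^T\w_\perp=0$) and $\tau$ (for $\lnorm{\w_\perp}^2/p=\sigma^2$) and completing the square in $\w$ reduces the remaining minimization to a $\psi$-regularized proximal problem whose minimizer is exactly $\mathbf P=\Prox_{\sigma\tau\psi(\cdot)}\!\bigl((\alpha-\sigma\tau\gamma)\w^\star+\beta\sigma\tau\sqrt\delta\,\h\bigr)$. Writing out the stationarity conditions of the resulting five-variable scalar saddle problem in $(\alpha,\sigma,\beta,\gamma,\tau)$ yields precisely the nonlinear system~\eqref{eq:nonlinsys}; strict convexity--concavity on the truncated domain gives uniqueness of the solution $(\bar\alpha,\bar\sigma,\bar\beta,\bar\gamma,\bar\tau)$.

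\medskip

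Once the AO minimizer is shown to concentrate on these scalarized values, the standard CGMT transfer yields $\hat\w=\Gamma(\w^\star,\h)+o_P(1)$, in the sense that for every open neighborhood $\mathcal O$ of $\Gamma(\w^\star,\h)$ one has $\Prob{\hat\w\in\mathcal O}\to 1$; a uniform-continuity argument on the compact truncations then upgrades this to $F(\hat\w,\w^\star)\overset{P}{\to}\Expect[F(\Gamma(\w,\h),\w)]$ for any locally-Lipschitz $F$. The main obstacle I anticipate is the scalarization step: because the sequence $(y_iZ_{1,i})_i$ is not a pure Gaussian vector but a deterministic function of $(Z_{1,i})_i$ through the link $\rho$, the usual ``pure-Gaussian AO'' calculations must be adapted by conditioning on $(y_i,Z_{1,i})$ and invoking the LLN against the joint law that defines $c_\kappa$. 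A secondary difficulty is verifying the compactness hypotheses of CGMT uniformly in $p$, which requires a priori bounds on $\alpha,\sigma,\beta$ obtained from quantitative versions of Theorem~\ref{thm:phase_transition} that keep the dual variable $\u$ bounded away from zero in average norm.
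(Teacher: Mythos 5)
Your proposal follows essentially the same route as the paper's proof: dualize the margin constraints, use the orthogonal decomposition along $\w^\star$ to make the labels independent of the Gaussian bilinear term, apply CGMT, scalarize the auxiliary problem (with the inner maximization over the nonnegative dual direction producing $\lnorm{(\cdot)_+}$ and the SLLN yielding $c_\kappa(\alpha,\sigma)$), reduce the primal side to a proximal/Moreau-envelope problem, and read off the nonlinear system~\eqref{eq:nonlinsys} as the stationarity conditions of the resulting five-variable convex--concave saddle problem. The only cosmetic deviations are that the paper absorbs the labels into the dual variable ($\bar\la=\la\odot\y$) rather than into the features, and introduces $\tau$ via the square-root trick $\sqrt{x}=\min_{\tau>0}\frac{x}{2\tau}+\frac{\tau}{2}$ rather than as a multiplier for the norm constraint; both lead to the same scalar optimization.
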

The detailed proof of this result is deferred to Appendix~\ref{sec:app_pf_main_thm}. In short, we introduce dual variables and write down the Lagrangian which contains a bilinear form with respect to a matrix with i.i.d. Gaussian entries. Exploiting the CGMT framework, we then analyze the nearly-separable auxiliary optimization to find its optimal value, and show that the nonlinear system~\eqref{eq:nonlinsys} corresponds to its optimality condition. 
\begin{rem}
The result in Theorem~\ref{thm:main} is stated for a general locally-Lipschitz function $F(\cdot, \cdot)$. To evaluate a specific performance measure, one can appeal to this theorem with an appropriate choice of $F$. As an example, the function $F(\u,\v)=\frac{1}{p}\lnorm{\u-\v}^2$ gives the mean-squarred error (MSE).
\end{rem}
\vspace{-0.9em}
\subsection{Generalization error}
\vspace{-0.6em}
Theorem~\ref{thm:main} can be utilized to derive useful information on the performance of the classifier. In fact, using this theorem one can show that the parameters $\bar \alpha$, and $\bar \sigma$ respectively correspond to the correlation (to the underlying parameter)  and the mean-squared error of the resulting estimator. \\
An important measure of performance is the generalization error, which indicates the success of the trained model on unseen data. Here, we compute the generalization error of the GMM classifier. We do so, by appealing to the result of Theorem~\ref{thm:main}.
\begin{defn}
The generalization error for a binary classifier with parameter $\hat \w$ is defined as,
\begin{equation}
    {GE}_{\hat \w} = \mathbb P_{\x}\{{\sgn(\x^T\hat\w)\neq \sgn(\x^T\w^\star)}\},
\end{equation}
where the probability is computed with respect to the distribution of the test data.
\end{defn}
It can be shown that when the distribution of the test data is rotationally invariant (e.g., Gaussian, uniform dist. on the unit-sphere), GE only depends on the angle between $\hat \w$ and $\w^\star$.  The following lemma provides sharp asymptotics on the generalization error of the GMM classifier.
\begin{lem}[Generalization Error]
\label{lem:gen_error}
Let $\hat \w$ be the GMM classifier defined in Section~\ref{sec:setup}. Assume $\delta>\delta^*$, and the (test) data is distributed according to the multivariate Gaussian distribution $\normal(\mathbf 0, \frac{1}{p}\Id_p)$. Then, as $p\rightarrow \infty$, we have,
\begin{equation}
    GE_{\hat \w} \overset{P}\rightarrow \frac{1}{\pi}\text{acos}(\frac{\kappa\bar\alpha}{\sqrt{\kappa^2{\bar{\alpha}}^2+{\bar \sigma}^2}}),
\end{equation}
where $\bar \alpha$ and $\bar \sigma$ are derived by solving the nonlinear system~\eqref{eq:nonlinsys}.
\end{lem}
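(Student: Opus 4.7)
The plan is to reduce the generalization error to a function of two scalar quantities — the normalized inner product $\hat{\w}^T\w^\star/p$ and the normalized squared norm $\|\hat{\w}\|^2/p$ — and then read off their limits directly from Theorem~\ref{thm:main}. The key preliminary observation is that, since the test point $\x \sim \normal(\mathbf{0}, \frac{1}{p}\Id_p)$ is independent of $(\hat{\w}, \w^\star)$, the pair $(\x^T\hat{\w},\, \x^T\w^\star)$ is, conditionally on $(\hat{\w},\w^\star)$, a centered bivariate Gaussian whose correlation coefficient equals
\[
\rho(\hat{\w},\w^\star) := \frac{\hat{\w}^T \w^\star}{\lnorm{\hat{\w}}\,\lnorm{\w^\star}}.
\]
A standard Gaussian orthant computation (Sheppard's formula) then yields
\[
GE_{\hat{\w}} = \Probe_{\x}\{\sgn(\x^T\hat{\w}) \neq \sgn(\x^T\w^\star)\} = \tfrac{1}{\pi}\,\mathrm{acos}\bigl(\rho(\hat{\w},\w^\star)\bigr).
\]
So only the limit of $\rho(\hat{\w},\w^\star)$ needs to be identified.

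Next, I would apply Theorem~\ref{thm:main} twice with two different locally-Lipschitz choices of $F$. Taking $F(\u,\v) = \frac{1}{p}\u^T\v$ gives $\frac{1}{p}\hat{\w}^T\w^\star \overset{P}\to \frac{1}{p}\Expect[\w^{\star T}\mathbf{P}]$, which by the first equation of the nonlinear system~\eqref{eq:nonlinsys} equals $\bar{\alpha}\kappa^2$. Taking $F(\u,\v) = \frac{1}{p}\lnorm{\u}^2$ gives $\frac{1}{p}\lnorm{\hat{\w}}^2 \overset{P}\to \frac{1}{p}\Expect\lnorm{\mathbf{P}}^2$, which by the third equation equals $\bar{\alpha}^2\kappa^2 + \bar{\sigma}^2$. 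Combined with $\frac{1}{p}\lnorm{\w^\star}^2 = \kappa^2$, these yield
\[
\rho(\hat{\w},\w^\star) \overset{P}\to \frac{\bar{\alpha}\kappa^2}{\sqrt{\kappa^2}\,\sqrt{\bar{\alpha}^2\kappa^2+\bar{\sigma}^2}} = \frac{\kappa\bar{\alpha}}{\sqrt{\kappa^2\bar{\alpha}^2+\bar{\sigma}^2}},
\]
and the lemma follows from the continuous mapping theorem applied to $x\mapsto \tfrac{1}{\pi}\mathrm{acos}(x)$.

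The main technical obstacle is justifying the use of $F(\u,\v)=\frac{1}{p}\u^T\v$ and $F(\u,\v)=\frac{1}{p}\lnorm{\u}^2$ in Theorem~\ref{thm:main}, since these are only \emph{locally} (not globally) Lipschitz on $\R^p\times\R^p$. To handle this cleanly, I would first show that $\lnorm{\hat{\w}}/\sqrt{p}$ is bounded in probability — this can be deduced from the CGMT-based analysis underlying Theorem~\ref{thm:main}, which already controls $\lnorm{\mathbf{P}}^2/p$ — and then restrict $F$ to a compact set $[-M,M]^{p}\times[-M,M]^p$ via a smooth truncation, on which it is Lipschitz. A secondary detail is that forming the ratio $\rho$ requires the denominator to be bounded away from zero; this is guaranteed because $\bar{\alpha}^2\kappa^2 + \bar{\sigma}^2 > 0$ (indeed $\bar\sigma>0$ in the interpolating regime), which allows one to invoke the continuous mapping theorem for the composition of $\mathrm{acos}$ with the ratio map. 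Modulo these measure-theoretic hygiene steps, the proof is essentially a direct corollary of Theorem~\ref{thm:main} and the Gaussian sign-agreement identity.
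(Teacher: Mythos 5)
Your proposal is correct and follows essentially the same route as the paper: reduce $GE_{\hat\w}$ to $\tfrac{1}{\pi}\mathrm{acos}$ of the normalized correlation via the Gaussian sign-agreement identity, then apply Theorem~\ref{thm:main} with $F(\u,\v)=\tfrac{1}{p}\u^T\v$ and $F(\u,\v)=\tfrac{1}{p}\lnorm{\u}^2$ and read off the limits from the first and third equations of~\eqref{eq:nonlinsys}. Your extra truncation step is harmless but unnecessary, since Theorem~\ref{thm:main} is already stated for \emph{locally} Lipschitz $F$, which both choices satisfy.
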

\begin{proof}
We first note that when the data is normally distributed, the generalization error for $\hat \w$ is defined as,
\begin{equation}
    \label{eq:lem_2_1}
    GE_{\hat\w} = \frac{1}{\pi}\text{acos}(\frac{{\hat \w}^T\w^\star}{\lnorm{\w^\star}\lnorm{\hat \w}}).
\end{equation} 
We  appeal to the result of Theorem~\ref{thm:main} with two different functions. Using $F_1(\u,\v) = \frac{1}{p}\v^T\u$ in~\eqref{eq:main_perf_measure} will give,
\begin{equation}
    \label{eq:lem_2_2}
    \frac{1}{p}\hat\w^T\w^\star\overset{P}\rightarrow\frac{1}{p}\Expect\big[ {\w^\star}^T\Prox_{\bar\sigma\bar\tau\psi(\cdot)}\big((\bar\alpha-\bar\sigma\bar\tau\bar\gamma)\w^\star+\bar\beta \bar\sigma\bar\tau\sqrt\delta \h\big)\big].
\end{equation}
Since $(\bar\alpha, \bar \sigma, \bar \beta, \bar \gamma, \bar \tau)$ is the solution to the nonlinear system, we can replace the expectation from the first equation in~\eqref{eq:nonlinsys},which gives the following,
\begin{equation}
    \label{eq:lem_2_3}
    \frac{1}{p}\hat\w^T\w^\star\overset{P} \rightarrow \kappa^2\bar\alpha. 
\end{equation}
Similarly, using the result of Theorem~\ref{thm:main} for the measure function $F_2(\u, \v)=\frac{1}{p}\lnorm{\u}^2$, along with the third equation in~\eqref{eq:nonlinsys} gives, 
\begin{equation}
    \label{eq:lem_2_4}
    \frac{1}{\sqrt{p}}\lnorm{\hat\w} \overset{P}\rightarrow  \sqrt{\kappa^2{\bar\alpha}^2+{\bar\sigma}^2}~.
\end{equation}
The proof is the consequence of~\eqref{eq:lem_2_1},~\eqref{eq:lem_2_3}, and~\eqref{eq:lem_2_4}, along with the continuity of the  function $\text{acos}(\cdot)$. 
\end{proof}

\vspace{-1.0em}
\section{GMM for Various Structures}
\label{sec:structured}
 As explained earlier, the potential function $\psi(\cdot)$ is chosen to encourage the structure of the underlying parameter. In this section, we investigate the performance of the GMM classifier for some common structures and the corresponding choices of the potential function.
 \vspace{-0.5em}
 \subsection{Max-margin classifier ($\ell_2$-GMM)}
 The $\ell_2$-norm regularization is commonly used in machine learning applications to stabilize the model. Here, we study the performance of the GMM classifier when $\psi(\cdot)=\frac{1}{2}\lnorm{\cdot}_2^2$, i.e., the solution to the following optimization program,
 \begin{equation}
    \label{eq:l2_GMM}
    \begin{aligned}
    &&&\min_{\w \in \R^p}~~~\frac{1}{2}\lnorm{\w}_2^2\\
&&&~~~\text{ s.t. }y_i(\x_i^T\w)\geq 1, ~~\text{for }1\leq i\leq n.
    \end{aligned}
\end{equation}
The optimization program~\eqref{eq:l2_GMM} is called the hard-margin SVM and the corresponding solution is the max-margin classifier, as it maximizes the minimum distance (margin) of the datapoints from the separating hyperplane. As mentioned earlier in Section~\ref{sec:intro}, the conventional justification for using such a classifier is that the risk of a classifier is inversely proportional to its margin. The performance of  $\ell_2$-GMM~\eqref{eq:l2_GMM}, has been earlier analyzed in~\cite{deng2019model} and~\cite{montanari2019generalization}. The form we present below in \eqref{eq:nonlinsys_l2}, differes in appearance to the results of \citep{deng2019model}, but can be shown to be equivalent.
\\
When $\psi(\cdot)=\frac{1}{2}\lnorm{\cdot}_2^2$, the proximal operator has the following closed-form,
\begin{equation}
    \label{eq:prox_l2}
    \text{Prox}_{\frac{t}{2}\lnorm{\cdot}^2}(\u) = \frac{1}{1+t}\u. 
\end{equation}
By replacing the proximal operator in the nonlinear system~\eqref{eq:nonlinsys}, we can explicitly find two of the variables ($\beta$, and $\gamma$) and reduce it to the following system of three nonlinear equations in three unknowns,
\begin{equation}
\label{eq:nonlinsys_l2}
    \begin{cases}
    \begin{aligned}
    &&\sqrt{c_{\kappa}(\alpha,\sigma)} &= \sigma\sqrt{\delta},\\
        &&\frac{\partial c_{\kappa}(\alpha,\sigma)}{\partial \alpha} &= \frac{-2\kappa^2\alpha\tau\sigma\delta}{1+\sigma\tau},\\
    &&\frac{\partial c_{\kappa}(\alpha,\sigma)}{\partial \sigma} &= \frac{2\sigma\delta}{1+\sigma\tau}.
    \end{aligned}
    \end{cases}
\end{equation}
\vspace{-0.6em}
\subsection{Sparse classifier ($\ell_1$-GMM)}
\vspace{-0.6em}
In today's machine learning applications, typically the number of available features, $p$, is overwhelmingly large. To reduce the risk of overfitting in such settings, feature selection methods are often performed to exclude irrelevent variables from the model~\cite{james2013introduction}.  Adding an $\ell_1$ penalty is the most popular approach for feature selection.\\
As a natural consequence of our main result in Theorem~\ref{thm:main}, here we analyze the asymptotic performance of GMM when the potential function is the $\ell_1$ norm, and evaluate its success on the unseen data (i.e., the test error) when the underlying parameter, $\w^\star$, is sparse.
 \begin{equation}
    \label{eq:l1_GMM}
    \begin{aligned}
    &&&\min_{\w \in \R^p}~~~\lnorm{\w}_1\\
&&&~~~\text{ s.t. }y_i(\x_i^T\w)\geq 1, ~~\text{for }1\leq i\leq n.
    \end{aligned}
\end{equation}
In this case, the proximal operator of the potential function ($\lnorm{\cdot}_1$) is basically equivalent to applying the soft-thresholding operator, on each entry, i.e.,
\begin{equation}
    \label{prox_l1}
    \text{Prox}_{t\lnorm{\cdot}_1}(\u) = \eta(\u,t),
\end{equation}
where $\eta(x,t) := \frac{x}{|x|}(|x|-t)_+$ is the soft-thresholding operator. Here, for a sparsity factor $s\in (0,1]$, we assume the entries of $\w^\star$ are sampled i.i.d. from the following distribution,
\begin{equation}
\label{eq:sparse_dist}
    \Pi_s(w) = (1-s)\cdot \delta_0(w) + s\cdot \big(\frac{\phi(\frac{w}{\frac{\kappa}{\sqrt s}})}{\frac{\kappa}{\sqrt s}}\big),
\end{equation}
where $\delta_0(\cdot)$ is the Dirac delta function, and $\phi(t):=\frac{e^{-\frac{t^2}{2}}}{\sqrt{2\pi}}$ is the density of the standard normal random variable. This means that each of the entries of $\w^\star$ are zero with probability $1-s$, and the nonzero entries have independent Gaussian distribution with variance $\frac{\kappa^2}{s}$. Having this assumption we can further simplify the first three equations in the nonlinear system~\eqref{eq:nonlinsys}, and present them in terms of q-functions. To streamline our representation, we introduce the following proxies,
\begin{equation}
\label{eq:proxies}
    t_1 = \frac{\sigma\tau}{\sqrt{\frac{\kappa^2}{s}(\alpha-\sigma\tau\gamma)^2 + \beta^2\sigma^2\tau^2\delta}}~,~~ t_2 =\frac{1}{\beta\sqrt{\delta}}.
\end{equation}
We also define the function $\chi:\mathbb R\rightarrow \R_+$ as,
\begin{equation}
\begin{aligned}
&&\chi(t) &= \Expect{\big[(Z-t)_+^2\big]}~,~~~~Z\sim \mathcal N(0,1)\\
&&&= Q(t)(1+t^2) -t\phi(t),
\end{aligned}
\end{equation}
Where $Q(t):=\int_{t}^{\infty}\phi(x)dx$ denotes the tail distribution of standard normal random variable. We are now able to simplify the first three equations in~\eqref{eq:nonlinsys} and derive the following nonlinear system,
\begin{equation}
\label{eq:nonlinsys_l1}
    \begin{cases}
    Q(t_1) = \frac{\alpha}{2(\alpha-\sigma\tau\gamma)},\\
    s\cdot Q(t_1) + (1-s)\cdot Q(t_2) = \frac{\sqrt{c_{\kappa}(\alpha,\sigma)}}{2\beta\sigma\tau\delta},\\
    \frac{s}{t_1^2}\cdot\chi(t_1)+\frac{(1-s)}{t_2^2}\cdot\chi(t_2) = \frac{\kappa^2\alpha^2}{2\sigma^2\tau^2}+ \frac{1}{2\tau^2},\\
    \frac{\partial c_{\kappa}(\alpha,\sigma)}{\partial \alpha} = \frac{2\kappa^2\gamma}{\beta}\sqrt{c_{\kappa}(\alpha,\sigma)},\\
    \frac{\partial c_{\kappa}(\alpha,\sigma)}{\partial \sigma} = \frac{2\sqrt{c_{\kappa}(\alpha,\sigma)}}{\beta \tau}.    
    \end{cases}
\end{equation}
The nonlinear system~\eqref{eq:nonlinsys_l1} can be solved via numerical methods. For our numerical simulations in Section~\ref{sec:num_sim} we exploit accelerated fixed-point methods to solve the nonlinear system. Using the the result of Lemma~\ref{lem:gen_error}, we can compute the generalization error. \\
Another important measure in this setting (when $\w^\star$ is sparse) is the probability of error in support recovery. Let $\Omega\subseteq [p]$ denote the support of $\w^\star$ (i.e. $\Omega=\{j:\w^\star_j\neq 0\}$.) For a pre-defined threshold $\epsilon$, we form the following estimate of the support,
\begin{equation}
    \hat\Omega_{\epsilon} = \{j:1\leq j\leq p, |{\hat \w}_j|>\epsilon\}.
\end{equation}
The following lemma establishes the success in the support recovery:
\begin{lem}[Support Recovery]
\label{lem:supp_rec}
For a sparsity factor $s\in(0,1]$, let the entries of $\w^\star$ have distribution $\Pi_s$ defined in~\eqref{eq:sparse_dist}, and $\hat \w$ be the solution to the optimization~\eqref{eq:l1_GMM}. Then, as $p\rightarrow \infty$, we have,
\begin{equation}
    \label{eq:support_recovery}
    \begin{aligned}
    &&\lim_{\epsilon\downarrow 0} P_1(\epsilon) &:= \Prob{j\notin \hat\Omega_{\epsilon}|j\in \Omega}\overset{P}\rightarrow 1-2Q(\bar{t}_1)\\
    &&\lim_{\epsilon\downarrow 0} P_2(\epsilon) &:= \Prob{j\in \hat\Omega_{\epsilon}|j\notin \Omega}\overset{P}\rightarrow 2Q(\bar{t}_2)~,
    \end{aligned}
\end{equation}
where $\bar{t}_1$ and $\bar{t}_2$ are defined as in~\eqref{eq:proxies}, with variables derived from solving the nonlinear system~\eqref{eq:nonlinsys_l1}.
\end{lem}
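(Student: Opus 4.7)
The plan is to apply Theorem~\ref{thm:main} coordinatewise. Because $\psi(\cdot) = \lnorm{\cdot}_1$ is separable and its proximal operator is the entrywise soft-thresholding, the map $\Gamma(\v_1,\v_2)$ in~\eqref{eq:Gamma_def} acts on each coordinate $j$ as $\eta(X_j,\bar\sigma\bar\tau)$ with $X_j := (\bar\alpha-\bar\sigma\bar\tau\bar\gamma)\,w^\star_j + \bar\beta\bar\sigma\bar\tau\sqrt{\delta}\,h_j$. I would then rewrite the conditional probabilities as ratios of coordinate averages,
\[P_1(\epsilon) = \frac{\frac{1}{p}\sum_j \mathbf{1}\{w^\star_j \neq 0\}\,\mathbf{1}\{|\hat w_j| \leq \epsilon\}}{\frac{1}{p}\sum_j \mathbf{1}\{w^\star_j \neq 0\}},\]
and analogously for $P_2(\epsilon)$. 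The denominators converge in probability to $s$ and $1-s$, respectively, by the law of large numbers for the i.i.d. entries of $\w^\star$.

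For each numerator I would apply Theorem~\ref{thm:main} to a separable test function $F(\u,\v) = \frac{1}{p}\sum_j g(v_j)\,f(u_j)$, where $g$ and $f$ are Lipschitz sandwich approximations of $\mathbf{1}\{v \neq 0\}$ and $\mathbf{1}\{|u|\leq \epsilon\}$ --- piecewise-linear functions that vanish at the jump points and equal the indicators outside a shrinking neighborhood of them. Each resulting upper/lower bound converges, in probability, to
\[\E\big[\mathbf{1}\{w \neq 0\}\,\mathbf{1}\{|\eta(X,\bar\sigma\bar\tau)|\leq \epsilon\}\big],\]
with $w\sim\Pi_s$ and $h\sim\normal(0,1)$ independent, and $X = (\bar\alpha-\bar\sigma\bar\tau\bar\gamma)w + \bar\beta\bar\sigma\bar\tau\sqrt{\delta}\,h$. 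The sandwich error vanishes as the approximation width shrinks because the nonzero component of $\Pi_s$ and the marginal of $X$ are both absolutely continuous.

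It remains to take $\epsilon\downarrow 0$ and evaluate the limiting expectation by splitting on the mixture components of $\Pi_s$. Conditional on $w\neq 0$, $w\sim\normal(0,\kappa^2/s)$, so $X$ is centered Gaussian with variance $V_1 = \frac{\kappa^2}{s}(\bar\alpha-\bar\sigma\bar\tau\bar\gamma)^2 + \bar\beta^2\bar\sigma^2\bar\tau^2\delta$. Since the event $|\eta(X,\bar\sigma\bar\tau)|\leq \epsilon$ is exactly $|X|\leq \bar\sigma\bar\tau+\epsilon$, its probability tends to $1-2Q(\bar\sigma\bar\tau/\sqrt{V_1}) = 1-2Q(\bar t_1)$ by the definition of $t_1$ in~\eqref{eq:proxies}; dividing by $s$ gives the claim for $P_1$. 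Conditional on $w=0$, $X = \bar\beta\bar\sigma\bar\tau\sqrt{\delta}\,h$, and $\Prob{|\eta(X,\bar\sigma\bar\tau)|>\epsilon}\rightarrow \Prob{|h|>1/(\bar\beta\sqrt\delta)} = 2Q(\bar t_2)$, which after division by $1-s$ yields the claim for $P_2$. The principal technical obstacle is the non-Lipschitz nature of the indicators defining $P_1, P_2$: carrying out the sandwich rigorously requires showing that the approximation error vanishes uniformly in $p$, which in turn relies on the absolute continuity of $X$ and of the Gaussian component of $\Pi_s$ so that the transition-width and the tolerance $\epsilon$ can be sent to zero in a compatible order without altering the limit.
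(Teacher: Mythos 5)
Your overall route---apply Theorem~\ref{thm:main} coordinatewise with separable test functions, use the explicit soft-thresholding form of the proximal operator, and read off the two conditional probabilities from the Gaussian law of $X=(\bar\alpha-\bar\sigma\bar\tau\bar\gamma)w+\bar\beta\bar\sigma\bar\tau\sqrt{\delta}\,h$---is exactly what the paper's framework calls for (the paper does not spell out a proof of this lemma; its Appendix~D.2 computations for $\ell_1$-GMM are the same ingredients you use), and your evaluation of the two limits, including the identification $\bar\sigma\bar\tau/\sqrt{V_1}=\bar t_1$ and $1/(\bar\beta\sqrt{\delta})=\bar t_2$, is correct.

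The one step that does not work as literally stated is the two-sided Lipschitz sandwich of $g(v)=\mathbf{1}\{v\neq 0\}$. Any Lipschitz upper bound of this indicator must equal $1$ at $v=0$, so its error against $\Pi_s$ is at least the atom mass $1-s$, which does not shrink with the transition width; absolute continuity of the \emph{nonzero} component of $\Pi_s$ does not save you, because the jump of $g$ sits exactly on the atom. The repair is easy and worth stating: the factor $\mathbf{1}\{w^\star_j\neq 0\}$ depends only on $w^\star$, not on $\hat\w$, so you may replace it by a Lipschitz surrogate $g_\eta$ (e.g.\ $g_\eta(v)=\min(|v|/\eta,1)$, which agrees with the indicator at $v=0$) and bound the replacement error directly by
\begin{equation}
\Big|\tfrac{1}{p}\textstyle\sum_j\big(g_\eta(w^\star_j)-\mathbf{1}\{w^\star_j\neq 0\}\big)f(\hat w_j)\Big|\;\le\;\|f\|_\infty\,\tfrac{1}{p}\textstyle\sum_j\mathbf{1}\{0<|w^\star_j|<\eta\},
\end{equation}
which converges by the ordinary law of large numbers for the i.i.d.\ entries of $\w^\star$ to $\|f\|_\infty\,\Probe\{0<|w|<\eta\}\to 0$ as $\eta\downarrow 0$; no appeal to Theorem~\ref{thm:main} is needed for this piece. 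The genuine two-sided sandwich is then only required for $f(u)=\mathbf{1}\{|u|\le\epsilon\}$, where it does work for fixed $\epsilon>0$ because the limiting law of $\eta(X,\bar\sigma\bar\tau)$ has no atom at $\pm\epsilon$. With that substitution your argument, including the final interchange of $p\to\infty$ and $\epsilon\downarrow 0$, goes through.
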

\subsection{Binary classifier ($\ell_{\infty}$-GMM)}
As the last example of structured classifiers, here we study the case where $\w^\star\in\{\pm\}^p$. To encourage this structure, the potential function is chosen to be the $\ell_{\infty}$ norm. In linear regression, $\lnorm{\cdot}_{\infty}$ is used to recover the binary signals, i.e., when $\w^\star \in \{\pm 1\}^p$~\cite{chandrasekaran2012convex}. This problem arises in integer programming and has some connections to the Knapsack problem~\cite{mangasarian2011probability}. Here, we consider analyzing the performance of the solution of the following optimization program,
 \begin{equation}
    \label{eq:l_inf_GMM}
    \begin{aligned}
    &&&\min_{\w \in \R^p}~~~\lnorm{\w}_{\infty}\\
&&&~~~\text{ s.t. }y_i(\x_i^T\w)\geq 1, ~~\text{for }1\leq i\leq n.
    \end{aligned}
\end{equation}
It can be shown that the proximal operator of the $\ell_{\infty}$-norm can be derived by projecting the points onto the $\ell_1$-ball. We use this connection to present the proximal operator in this case in terms of the soft-thresholding operator $\eta(\cdot, \cdot)$.\\
For a vector $\w$ whose entries are drawn independently from a distribution $\Pi$, we can present the following formula for the proximal operator:
\begin{equation}
    \text{Prox}_{tp\lnorm{\cdot}_{\infty}}(\w) = \w - \text{Prox}_{\lambda\lnorm{\cdot}_1}(\w),
\end{equation}
where $\lambda:=\lambda(t)$ is the smallest nonnegative number that satisfies,
\begin{equation}
    \Expect{\big[|\eta(W,\lambda)|\big]} = \Expect{\big[(|W|-\lambda)_+\big]}\leq t.
\end{equation}
Here, the expectation is with respect to $W\sim \Pi$. Note that $\lambda$ is a non-increasing function of $t$, and $\lambda=0$ whenever $t\ge \Expect{|W|}$. \\
Similar to the case of $\ell_1$-GMM, here we can use the closed-form of the proximal operator to simplify the first three equations in the nonlinear system~\eqref{eq:nonlinsys}. For our numerical simulations in the next section, we have done the computations for three different distributions: (1) The i.i.d. Gaussian distribution, (2) the sparse distribution defined in~\eqref{eq:sparse_dist}, and (3) the uniform binary distribution, $\Pi = \text{Unif}\big(\{\pm 1\}^p\big)$.  We postpone the details of the theoretical derivations for this part to Appendix~\ref{sec:app_bin_classifier}.  

\begin{figure}[t]
\vskip 0.2in
\begin{center}
\centerline{\includegraphics[width=260pt]{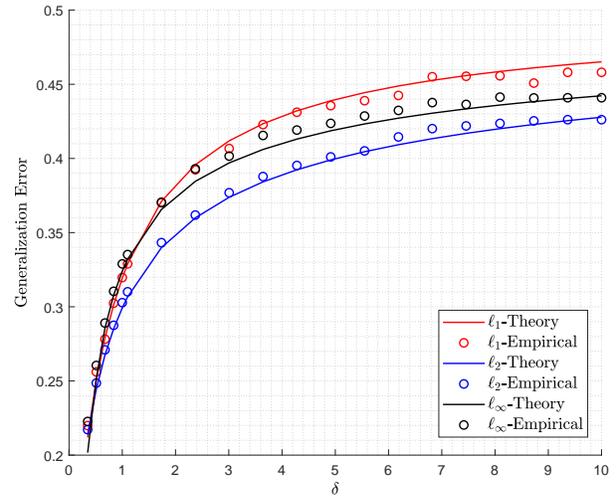}}
\caption{Generalization error of the general max margin classifier under three penalty functions, $\ell_1$ norm with the red line ($\ell_1$-GMM), $\ell_2$ norm with the blue line ($\ell_2$-GMM), and $\ell_\infty$ norm with the black line ($\ell_\infty$-GMM). \textbf{In this figure, the entries of $\mathbf w^\star$ are drawn independently from $\mathcal N(0,\kappa^2)$ Gaussian distribution}. Solid lines correspond to the theoretical results derived from Theorem \ref{thm:main}, while the circles are the result of empirical simulations. For the numerical simulations, the result is the average over 100 independent trials with $p=200$ and $\kappa=2$.}
\label{gaussian_w}
\end{center}
\vskip -0.2in
\end{figure}
\vspace{-0.8em}
\section{Numerical Simulations}\label{sec:num_sim}
\vspace{-0.5em}

In this section, we investigate the validity of our theoretical results with multiple numerical simulations applied to the three different cases of GMM classifiers elaborated in Section~\ref{sec:num_sim}. For each of the three potentials discussed in the paper (i.e., $\ell_1$, $\ell_2$, and $\ell_{\infty}$ norms) we perform numerical simulations for three different models on the distribution of $\w^\star$. In other words, we change the distribution of the entries of $\w^\star$ and evaluate the performance of the aforementioned classifiers on each model. As will observed in our numerical simulations, the appropriate choice of the potential function in the GMM optimization~\eqref{eq:opt_main} has an impact on the generlization error of the resulting classifier. The three different distribution that we choose for the underlying parameter are as follows:
\begin{figure}[t]
\begin{center}
\centerline{\includegraphics[width=260pt]{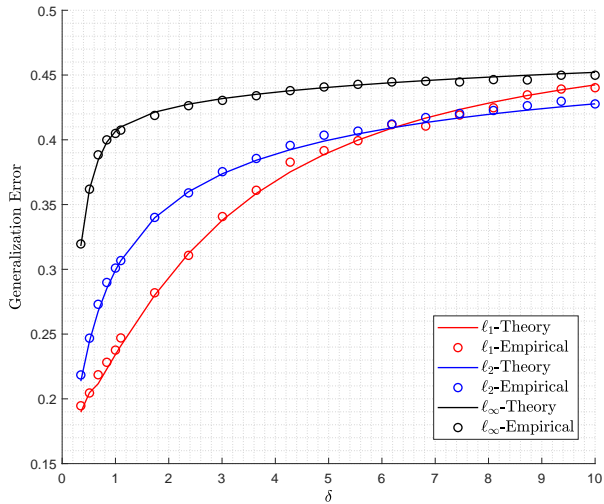}}
\caption{Generalization error of the general max margin classifier under three penalty functions, $\ell_1$ norm with the red line ($\ell_1$-GMM), $\ell_2$ norm with the blue line ($\ell_2$-GMM), and $\ell_\infty$ norm with the black line ($\ell_\infty$-GMM). \textbf{In this figure, the underlying vector $\mathbf w^\star$ is $s$-sparse, where the non-zero entries are drawn independently from $\mathcal N(0,\kappa^2/s)$ Gaussian distribution}. Solid lines correspond to the theoretical results derived from Theorem \ref{thm:main}, and the circles are the result of empirical simulations. For the numerical simulations, the result is computed by taking the average over 100 independent trials with $p=200$, $s=.1$ and $\kappa=2$.}
\label{sparse_w}
\end{center}
\vskip -0.3in
\end{figure}

{\bf{ Gaussian}}: in the first model, we assume that the entries of $\w^\star$ are drawn from a zero-mean Gaussian distribution, $\normal(0, \kappa^2)$. In this model, the direction of $\w^\star$ (which indicates the separating hyperplane) is distributed uniformly on the unit sphere. Figure~\ref{gaussian_w} gives the generalization error when $\w^\star$ has Gaussian distribution. The solid lines show the theoretical results derived from Theorem~\ref{thm:main} and Lemma~\ref{lem:gen_error}. The circles depict empirical results that are computed by taking the average over $100$ trials with $p=200$ and $\kappa=2$. Although our theory provides the generalization error in the asymptotic regime, it appropriately matches the result of empirical simulations in our simulations in finite dimensions. It can be observed in this figure that the max-margin classifier ($\ell_2$-GMM) outperforms the other two classifiers. We should also note that as the overparameterization ratio, $\delta$, grows the generalization error increases which indicates that the estimator is not reliable for large values of $\delta$.\\
{\bf{ Sparse}}: here, we assume that the entries of $\w^\star$ are drawn from the sparse distribution represented in~\eqref{eq:sparse_dist}, i.e., each entry is nonzero with probability $s$, and the nonzero entries have i.i.d. Gaussian distribution with appropriately-defined variance. Figure~\ref{sparse_w} demonstrates the result of the numerical simulations for this model for the three different classifiers of interest. The empirical result is the average over $100$ trials with $p=200$, $s=0.1$, and $\kappa=2$. Similar to the previous case, the empirical results match the theory. Also, it can be observed that the $\ell_1$-GMM outperforms the two other classifiers in the regime of $\delta$ that the classifiers performs well (i.e. $\delta \precapprox 6$.) Similarly, we can observe that for large values of $\delta$ all the classifiers perform poorly.\\
\begin{figure}[t]
\vskip 0.2in
\begin{center}
\centerline{\includegraphics[width=270pt]{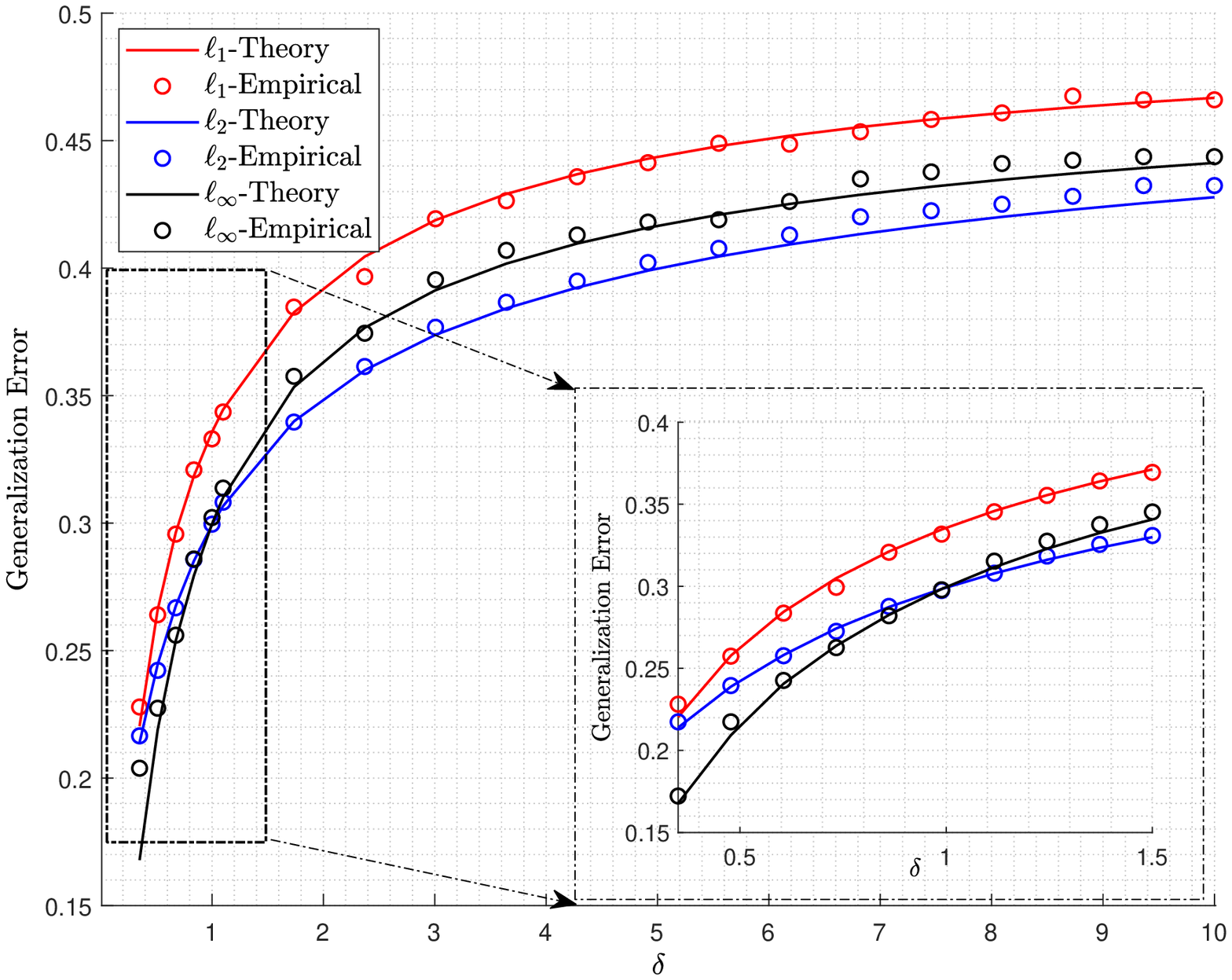}}
\caption{Generalization error of the general max margin classifier under three penalty functions, $\ell_1$ norm with the red line ($\ell_1$-GMM), $\ell_2$ norm with the blue line ($\ell_2$-GMM), and $\ell_\infty$ norm with the black line ($\ell_\infty$-GMM). \textbf{In this figure, the entries of $\mathbf w^\star$ are drawn independently from $\kappa*\Rad(0.5)$ Rademacher distribution}. Solid lines correspond to the theoretical results derived from Theorem \ref{thm:main}, and the circles are the result of empirical simulations. For the numerical simulations, the result is the average over 100 independend trials with $p=200$ and $\kappa=2$.}
\label{fig:binary_w}
\end{center}
\vskip -0.3in
\end{figure}
{\bf{ Binary}}: in this model the entries of $\w^\star$ are independently drawn from $\{+\kappa, -\kappa\}$, i.e., $\w^\star$ is uniformly chosen on the discrete set $\{\pm\kappa\}^p$. Figure~\ref{fig:binary_w} shows the result of numerical simulations under this model. Similar to previous cases the empirical results ($\kappa=2,~p=200$) match the theory. Also, the $\ell_{\infty}$-GMM classifier outperforms the other two classifiers for $\delta<1$ (which corresponds to the underparameterized setting). However, the max-margin classifier performs better for larger values of $\delta$. 

\section{Conclusion and Future Directions}
In this paper, we introduced the generalized margin maximizers (GMM) as a way to extend the max-margin classifiers to structured models. To this end, we  proposed an optimization program whose objective function is a convex potential function $\psi(\cdot)$ that encourages the underlying structure, and the constraints are similar to the max-margin classifier (hard-margin SVM). Our main result in Theorem~\ref{thm:main} provides the asymptotic behavior of GMM classifier for any locally-Lipschitz performance measure via solving a system of nonlinear equations. We utilize this result to characterize the generalization error in the asymptotic regime.\\ 
We examined our theoretical findings on three specific choices of the potential function, $\ell_1$, $\ell_2$, and $\ell_{\infty}$ norms. We simplified the nonlinear systems for each of these functions and validated our theoretical results in numerical simulations by doing simulations on three different structures on the underlying parameter, $\w^*$. The numerical simulations indicates that for sparse signals, $\ell_1$-GMM outperforms the max-margin classifier ($\ell_2$-GMM). We also observed that for binary signals, when $\delta<1$, the $\ell_{\infty}$-GMM outperforms the two other classifiers.\\
In future works, we would like to extend our theory to predict some common phenomena (e.g. the double descent) for GMM. Also,  another avenue of pursuit is to  design iterative optimizaiton algorithms that would converge to the GMM classifier.

\newpage
\bibliography{library}
\bibliographystyle{icml2020}
\newpage

\appendix
\onecolumn
\section*{Appendix}
\section{Proof of Theorem~\ref{thm:main}}\label{sec:app_pf_main_thm}
Here, we present the proof of the main result of the paper. Recall that the generalized margin maximizer is defined as the soluiton to the following optimization program,
\begin{equation}
        \label{eq:app_main_opt}
    \begin{aligned}
    &&&\min_{\w \in \R^p}~~~\psi(\w)\\
&&&~~~\text{ s.t. }y_i(\x_i^T\w)\geq 1, ~~\text{for }1\leq i\leq n.
    \end{aligned}
\end{equation}
Theorem~\ref{thm:main} provides a precise characterization on the performance of this optimization program in the asymptotic regime, where $n,p\rightarrow \infty$ at a fixed ratio $\delta:=n/p$. We assume the datapoints are drawn independently from the multivariate gaussian distribution, i.e., $\x_i\overset{\text{i.i.d.}}\sim\normal(\mathbf 0, \frac{1}{p}\Id_{p})$.

For our analysis we utilize the CGMT framework (see Appendix~\ref{sec:CGMT}), which will provide us with a nearly-separable optimization program that has the same performance as~\eqref{eq:app_main_opt}. To simplify the presentation, we are breaking down the proof into the following three steps:
\begin{enumerate}
    \item Finding the auxiliary optimization: By introducing dual variables, we present the optimization~\eqref{eq:app_main_opt} as a bilinear form with respect to a Gaussian matrix. Consequently, we use the result of Theorem~\ref{thm:CGMT} and Corollary~\ref{cor:CGMT} to find the auxiliary optimization.
    \item Analyzing the auxiliary optimization: The first step provides a nearly-separable optimization. The purpose of this step is to simplify this optimization and present it in terms of an optimization program with respect to scalar variables.
    \item Optimality condition of the auxiliary optimization: By taking the derivatives with respect to various scalars, we present the first-order optimality condition on the solution of the (simplified) auxiliary optimization. Further simplification gives the nonlinear system~\eqref{eq:nonlinsys}.  
\end{enumerate}
We explain each of the three steps in more details in the following subsections.
\subsection{Finding the auxiliary optimization}
\label{sec:find_aux}
The following lemma presents the auxiliary optimization associated with the GMM optimization~\eqref{eq:app_main_opt}.
\begin{lem}
\label{lem:define_aux}
Let $\hat \w$ be the solution to the optimization~\eqref{eq:app_main_opt}. Consider the following optimization:
\begin{equation}
\begin{aligned}
    \label{eq:app_4}
    &&&\min_{\substack{\alpha\in \R\\\tilde\w \in \R^p\\\tilde \w\perp \w^\star}} ~\frac{1}{p}\psi(\alpha\w^\star+\tilde \w)\\ 
    &&&~\quad\qquad\text{s.t.}~~~\frac{1}{p}(\h^T\tilde \w)^2\geq n\cdot c_{\kappa}\big(\alpha, \frac{\lnorm{\tilde \w}}{\sqrt{p}}\big),
\end{aligned}
\end{equation}
where $\h\in \R^p$ has i.i.d. standard normal entries. Assume $(\bar \alpha, \bar{\tilde \w})\in \mathbb R\times \R^p$ be the solution to this optimization program. Then, as $p\rightarrow \infty$, we have:
\begin{equation}
    \label{eq:app_5}
    \lnorm{\hat \w - (\bar \alpha \w^\star+ \bar{\tilde \w})} \overset{P} \longrightarrow 0.
\end{equation}
\end{lem}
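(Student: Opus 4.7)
The plan is to identify~\eqref{eq:app_main_opt} as a Gaussian min-max problem and invoke the CGMT (Appendix~\ref{sec:CGMT}) to reduce it to~\eqref{eq:app_4}. After rescaling $\x_i=\tilde\x_i/\sqrt p$ with $\tilde\x_i\sim\normal(\mathbf 0,\Id_p)$ i.i.d., the constraint $y_i\x_i^T\w\ge 1$ reads $y_i\tilde\x_i^T\w\ge\sqrt p$. I would dualize with multipliers $\u\ge\mathbf 0$ and split every candidate as $\w=\alpha\w^\star+\tilde\w$ with $\tilde\w\perp\w^\star$. The key independence fact is that $y_i\sim\Rad(\rho(\kappa z_i))$ where $z_i:=\tilde\x_i^T\w^\star/\lnorm{\w^\star}\sim\normal(0,1)$, so $y_i$ depends on $\tilde\x_i$ only through $z_i$; conditional on $\{z_i,y_i\}$, the projections of $\tilde\x_i$ onto the orthogonal complement of $\w^\star$ are i.i.d.\ standard Gaussian on that codimension-one subspace, independent of $\{z_i,y_i\}$. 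Stacking these projections into a matrix $\tilde\A$ whose $i$th row is orthogonal to $\w^\star$, the Lagrangian becomes
\begin{equation*}
\psi(\alpha\w^\star+\tilde\w)+\sum_i u_i(1-\alpha\kappa y_iz_i)-\tfrac{1}{\sqrt p}\,\u^T\mathrm{diag}(\y)\,\tilde\A\,\tilde\w,
\end{equation*}
and since $\mathrm{diag}(\y)$ merely flips signs of rows, $\mathrm{diag}(\y)\tilde\A$ retains its i.i.d.\ $\normal(0,1)$ law on the restricted subspace.

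With the bilinear Gaussian term exposed, the CGMT replaces it by $\tfrac{1}{\sqrt p}(\lnorm{\tilde\w}\,\g^T\u+\lnorm{\u}\,\h^T\tilde\w)$ for independent $\g\in\R^n$ and $\h\in\R^p$ with standard Gaussian entries. Writing $\sigma:=\lnorm{\tilde\w}/\sqrt p$, the inner maximization becomes
\begin{equation*}
\max_{\u\ge\mathbf 0}\sum_i u_i(1-\alpha\kappa y_iz_i-\sigma g_i)-\tfrac{\lnorm{\u}}{\sqrt p}\,\h^T\tilde\w.
\end{equation*}
Parametrize $\u=r\v$ with $\v$ in the nonnegative orthant and $\lnorm{\v}=1$: the positive-orthant Cauchy--Schwarz yields the inner value $r\bigl(\sqrt{\sum_i(1-\alpha\kappa y_iz_i-\sigma g_i)_+^2}-\h^T\tilde\w/\sqrt p\bigr)$. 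The supremum over $r\ge 0$ is finite precisely when the bracket is nonpositive, which (after squaring and invoking the law of large numbers $\tfrac1n\sum_i(\cdot)_+^2\to c_\kappa(\alpha,\sigma)$ from Definition~\ref{def:c_kappa}) is exactly $\tfrac{1}{p}(\h^T\tilde\w)^2\ge n\,c_\kappa(\alpha,\sigma)$; on this feasible set the bracket vanishes and the (per $p$) objective reduces to $\tfrac1p\psi(\alpha\w^\star+\tilde\w)$, matching~\eqref{eq:app_4}.

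The main technical obstacles are three. First, CGMT demands compact convex feasible sets, whereas $\u\ge\mathbf 0$ is a priori unbounded; I would establish high-probability bounds $\lnorm{\hat\w}\le C_1$ (by exhibiting an explicit feasible $\w_0$, whose existence is guaranteed by $\delta>\delta^\star$ via Theorem~\ref{thm:phase_transition}, and invoking coercivity of $\psi$ on its sublevel sets) and $\lnorm{\hat\u}\le C_2$ (via strong duality of the convex program), then truncate both sides of the CGMT to these compacts without altering the minimax value. Second, the reduction of $\sqrt{\sum_i(\cdot)_+^2}$ to $\sqrt{n\,c_\kappa(\alpha,\sigma)}$ requires a \emph{uniform} LLN over $(\alpha,\sigma)$ in a compact range, which follows from an $\varepsilon$-net argument using that $t\mapsto(t)_+^2$ is locally Lipschitz. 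Third, and hardest, promoting CGMT's convergence of \emph{values} to convergence of \emph{minimizers} as claimed in~\eqref{eq:app_5} requires uniqueness of the asymptotic AO minimizer; this is precisely the uniqueness of the nonlinear system~\eqref{eq:nonlinsys} asserted in Theorem~\ref{thm:main}, combined with the standard CGMT deviation argument that any open neighborhood excluded around $(\bar\alpha,\bar{\tilde\w})$ incurs a strictly positive AO gap, which then transfers back to the primal~\eqref{eq:app_main_opt} to yield~\eqref{eq:app_5}.
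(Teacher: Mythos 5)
Your proposal follows essentially the same route as the paper's proof: Lagrangian dualization, decomposition of the Gaussian design along $\w^\star$ and its orthogonal complement so that the labels become independent of the bilinear form, application of the CGMT, maximization over the dual direction via the positive-orthant Cauchy--Schwarz identity $\sup_{\v\geq 0,\lnorm{\v}\leq 1}\v^T\x=\lnorm{(\x)_+}$, the law of large numbers to produce $c_\kappa(\alpha,\sigma)$, and conversion of the scalar maximization over the dual magnitude into the feasibility constraint of~\eqref{eq:app_4}. The only differences are cosmetic (you absorb $\mathrm{diag}(\y)$ into the Gaussian matrix where the paper changes variables $\bar\la=\la\odot\y$), and you are in fact more explicit than the paper about the compactification of the feasible sets, the uniformity of the LLN over $(\alpha,\sigma)$, and the deviation argument needed to upgrade convergence of optimal values to convergence of minimizers.
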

\begin{proof}
In order to apply the CGMT, we need to have a $\min$-$\max$ optimization.  Introducing the Lagrange variable, $\la:=[\lambda_1,\lambda_2,\ldots,\lambda_n]^T\in \R_+^n$, we can rewrite  the optimization program as follows,
\begin{equation}
    \label{eq:app_6}
    \min_{\w \in \R^p}\max_{\la\in \R_+^n}~ \frac{1}{p}\psi(\w) + \frac{1}{n} \sum_{i=1}^{n}\lambda_i\big(1-y_i(\x_i^T\w)\big).
\end{equation}
Note that the scaling has been performed in such a way that all the terms in the objective be of constant order. We define the matrix $\H\in \R^{n\times p}$ as,
\begin{equation}
    \label{eq:app_7}
    \H:=-\sqrt{p}\cdot\begin{bmatrix}-\x_1^T-\\-\x_2^T-\\ \vdots \\ -\x_n^T-\end{bmatrix}.
\end{equation}
Based on the assumption on the distribution of datapoints, this matrix has i.i.d. standrad normal $\normal(0,1)$ entries. To ease the notation, we also define a new variable $\bar \la= \la \odot \y$ (i.e., ${\bar \lambda}_i = \lambda_i y_i$) and reformulate the optimization~\eqref{eq:app_6} as,
\begin{equation}
    \label{eq:app_8} 
    \min_{\w \in \R^p}\max_{\substack{\bar \la\in \R^n\\\bar\lambda_i y_i\geq 0}} ~\frac{1}{p}\psi(\w) + \frac{1}{n} \bar\la^T\y+\frac{1}{n\sqrt{p}}{\bar \la}^T\H\w.
\end{equation}
We proceed by analyzing the optimization program~\eqref{eq:app_8}. In order to apply the CGMT, we need an additive bilinear form that is {\underline{ statistically independent}} of other functions that appear in the objective. Note that the label vector $\y\in \{\pm\}^n$ is a random variables that depends on $\H\w^\star$, as $\y = \Rad\big(\rho(-\frac{1}{\sqrt{p}}\H\w^\star)\big)$. Therefore, to remove this independence between $\y$ and the bilinear form, we use the projection onto $\w^\star$. Let $\P$ be the matrix of orthogonal projection onto $\text{span}(\w^\star)$, i.e., $\P = \frac{1}{\lnorm{\w^\star}^2}\w^\star{\w^\star}^T$, and $\P^\perp$ be its orthogonal completent, $\P^\perp = \Id_p - \P$. We use these projection matrices to decompose the Gaussian matrix $\H$ as $\H=\H_1+\H_2$ with $\H_1:= \H\times \P$, and $\H_2:=\H\times \P^\perp$. This gives the following equivalent optimization,
\begin{equation}
    \label{eq:app_9}
    \min_{\w\in \R^p}\max_{\substack{\bar \la\in \R^n\\\bar\lambda_i y_i\geq 0}} ~\frac{1}{p}\psi(\w) + \frac{1}{n} \bar\la^T\y+\frac{1}{n\sqrt{p}}{\bar \la}^T\H_1\w+\frac{1}{n\sqrt{p}}{\bar \la}^T\H_2{\w}.
    \tag{PO}
\end{equation}
It is worth noting that the projections of a Gaussian matrix (or vector) onto orthogonal subspaces are statistically independent. Also, the label vector $\y$ would be independent of $\H_2$ since,
\begin{equation}
    \label{eq:app_10}
    \y = \Rad\big(\rho(-\frac{1}{\sqrt{p}}\H\w^\star)\big) = \Rad\big(\rho(-\frac{1}{\sqrt{p}}\H\P\w^\star)\big) = \Rad\big(\rho(-\frac{1}{\sqrt{p}}\H_1\w^\star)\big),
\end{equation}
where we used $\P\w^\star = \w^\star$. Therefore, all the additive terms in the objective function of~\eqref{eq:app_9} except the last one are independent of $\mathbf H_2$. Also, the objective function is convex with respect to $\w$ and concave(linear) with respect to $\bar \la$. In order to apply the CGMT framework (Theorem~\ref{thm:CGMT}), we only need an extra condition which is restricting the feasibility sets of $\w$, and $\bar \la$ to be compact and convex. We can introduce some artificial convex and bounded  sets $\mathcal S_\w$,  and $\mathcal S_{\bar \la}$, and perform the optimization over these sets. Note that these sets can be chosen large enough such that they do not affect the optimization itself. For simplicity, in our arguments here we ignore the condition on the compactness of the fesible sets and apply the CGMT whenever the variables are defined on a convex domain.

The optimization program~\eqref{eq:app_9} is suitable to be analyzed via the CGMT as the conditions are all satisfied. Having identified~\eqref{eq:app_9} as the primary optimization, it is straightforward to write its corresponding auxiliary optimization (AO) [as in~\eqref{eq:POAO}, c.f. Appendix~\ref{sec:CGMT}]. The Auxiliary Optimization (AO) can be written as follows,
\begin{equation}
    \label{eq:app_11}
    \min_{\w\in \R^p}\max_{\substack{\bar \la\in \R^n\\\bar\lambda_i y_i\geq 0}} ~\frac{1}{p}\psi(\w) + \frac{1}{n} \bar\la^T\y+\frac{1}{n\sqrt{p}}{\bar \la}^T\H_1\w+\frac{1}{n\sqrt{p}}\big(\lnorm{\bar \la}\h^T\P^\perp \w+{\bar \la}^T\g\lnorm{\P^\perp \w}\big),
    \tag{AO}
\end{equation}
where $\h \in \R^p$ and $\g \in \R^n$ have i.i.d. standard normal entries.
Next, we decompose $\w$ as $\w:=\P\w + \P^\perp \w = \alpha\w^\star + \tilde\w$, where $\alpha\in \R$, and $\tilde \w\in \R^p$ is such that $\tilde \w \perp \w^\star$. We also define the vector $\q:= -\frac{1}{\kappa\sqrt{p}}\H\w^\star$. Note that since $\lnorm{\w} = \kappa\sqrt{p}$, the entries of $\q$ have standrad normal distribution. Therefore, we have the following equivalent optimization,
\begin{equation}
    \label{eq:app_12}
    \min_{\substack{\alpha\in \R\\\tilde\w \in \R^p\\\tilde \w\perp \w^\star}}\max_{\substack{\bar \la\in \R^n\\\bar\lambda_i y_i\geq 0}} ~\frac{1}{p}\psi(\alpha\w^\star+\tilde \w) + \frac{1}{n} \bar\la^T\y-\frac{\alpha\kappa}{n}{\bar \la}^T\q+\frac{1}{n\sqrt{p}}\big(\lnorm{\bar\la} \h^T\tilde \w + {\bar\la}^T\g\lnorm{\tilde \w}\big)~,
\end{equation}
Proceeding onwards, we solve the inner optimization ($\max_{\bar \la}$) with respect to the direction of $\bar \la$. We have:
\begin{align}
\label{eq:app_13}
  &&\max_{\substack{\bar \la\in \R^n\\\bar\lambda_i y_i\geq 0}}\frac{1}{n} \bar\la^T\y-\frac{\alpha\kappa}{n}{\bar \la}^T\q+\frac{1}{n\sqrt{p}}\big(\lnorm{\bar\la} \h^T\tilde \w+{\bar\la}^T\g\lnorm{\tilde \w}\big)=\max_{\substack{\bar \la\in \R^n\\\bar\lambda_i y_i\geq 0}} &  ~\frac{\lnorm{\bar \la}}{\sqrt n} (\frac{1}{\sqrt{np}}\h^T\tilde \w + \frac{1}{\sqrt n}\lnorm{\mmu})\\
  &&& \text{s.t. }~\mu_i = \big(1-\alpha\kappa q_iy_i + \frac{\lnorm{\tilde\w}}{\sqrt p} g_iy_i\big)_+,~~\text{for }1\leq i\leq n.\nonumber
\end{align}
Recall that the function $c_{\kappa}:\R\times \R_{+}\rightarrow \R$ is defined (c.f. Definition~\ref{def:c_kappa}) as follows:
\begin{equation}
    \label{def:cd_app}
    c_{\kappa}(t_1,t_2) = \Expect{\big(1-\kappa t_1  Z_1Y + t_2Z_2\big)_+^2}, 
\end{equation}
where $Z_1,Z_2$ are independent standard normal random variables, and $Y \sim \Rad\big(\rho(\kappa Z_1)\big)$. Therefore, we have $\Expect{\mu_i^2} = c_{\kappa}(\alpha, \frac{\lnorm{\tilde \w}}{\sqrt p})$, and using the SLLN, as $p,n \rightarrow \infty$ we can  replace $\lnorm{\mmu}$ with $\sqrt{n\cdot c_{\kappa}(\alpha, \frac{\lnorm{\tilde \w}}{\sqrt p})}$ due to the almost sure convergence. Introducing the positive variable $\beta = \frac{\lnorm{\bar \la}}{\sqrt n}$, we have the following reformulation of the auxiliary optimization,
\begin{equation}
    \label{eq:app_14}
    \min_{\substack{\alpha\in \R\\\tilde\w \in \R^p\\\tilde \w\perp \w^\star}}\max_{\beta\geq 0} ~\frac{1}{p}\psi(\alpha\w^\star+\tilde \w)+\frac{\beta}{\sqrt{np}}\h^T\tilde \w + \beta \cdot \sqrt{c_{\kappa}\big(\alpha,\frac{\lnorm{\tilde \w}}{\sqrt p}\big)}~~.
\end{equation}
We can write the inner maximization (with respect to $\beta$) as a constraint for the optimization, which gives the same formulation as~\eqref{eq:app_4}, i.e.,
\begin{equation}
\begin{aligned}
    \label{eq:app_15}
    &&&\min_{\substack{\alpha\in \R\\\tilde\w \in \R^p\\\tilde \w\perp \w^\star}} ~\frac{1}{p}\psi(\alpha\w^\star+\tilde \w)\\ 
    &&&~\quad\qquad\text{s.t.}~~~\frac{1}{p}(\h^T\tilde \w)^2\geq n\cdot c_{\kappa}\big(\alpha, \frac{\lnorm{\tilde \w}}{\sqrt{p}}\big),
\end{aligned}
\end{equation}
Using the result of Corollary~\ref{cor:CGMT}, we have that when the solution of the primary optimization converges as the problem dimensions grow ($p\rightarrow \infty$), the solution of the auxiliary optimization converges to the same set (point). This concludes the proof.
\end{proof}

\subsection{Analyzing the auxiliary optimization}\label{sec:analyze_aux}
In this section we analyze the performance of the refined version of the auxiliary optimization in~\eqref{eq:app_14}. Although this optimization program is (nearly) separable, it is still a high-dimensional optimization. Ideally, one would like to simplify this optimzation to obtain another optimization program in lower dimensions (with respect to a few scalar variables)  where the performance can be numerically computed. To do so, in this section we exploit some tools from convex analysis along with some tricks from calculus to further simplify the optimization program~\eqref{eq:app_14}.

The goal is to express the final result in terms of the \emph{expected Moreau envelope}  of the regularization function. To better understand the behavior of the solution in~\eqref{eq:app_14} we first introduce some new variables, $\u, \v\in \R^p$, and $\gamma \in \R$ and write the optimization as follows,
\begin{equation}
    \label{eq:app_16}
    \min_{\substack{\alpha\in \R\\\u,\tilde\w \in \R^p\\}}\max_{\substack{\beta\geq 0,\gamma\\\v\in \R^p}} ~\frac{1}{p}\psi(\u)+\frac{\beta}{\sqrt{np}}\h^T\tilde \w + \beta \cdot \sqrt{c_{\kappa}\big(\alpha,\frac{\lnorm{\tilde \w}}{\sqrt p}\big)}+\frac{1}{p}\v^T\big(\u - \alpha\w^\star-\tilde\w\big)+\frac{\gamma}{p}\w^{\star T}  \tilde \w.
\end{equation}
The variable $\u$  has been introduced to detach the impact of $s$ and $\tilde \w$ from $\psi(\cdot)$. The variables $\v$ and $\gamma$ are Lagrange dual variables to remove the constraints from the optimization.  We shall emphasize again that the normalization has been performed to ensure that all the terms in the objective are of constant order. Next, we would like to solve the minimization with respect to $\tilde \w$. 

Before continuing our analysis, we need to discuss an important point that would help us in the remaining of this section. It will be observed that in order to simplify the optimization, we would like to flip the orders of $\min$ and $\max$ in the (AO) optimization.  Since the objective function in the  auxiliary optimization is not convex-concave we cannot appeal to the Sion's min-max theorem in order to flip $\min$ and $\max$. However,  it has been shown (see Appendix~A in~\cite{thrampoulidis2018precise}) that flipping the orders of $\min$ and $\max$ in the (AO) is allowed {\underline{in the asymptotic setting}}. This is mainly due to the fact that the original (PO) optimization was convex-concave with respect to its variables, and as the CGMT suggests (AO) and (PO) are tightly related in the asymptotic setting; hence, flipping the order of optimizations in (AO) is justified whenever such a flipping is allowed in the (PO).  We appeal to this result to flip the orders of $\min$ and $\max$ when needed.

Next, we solve the optimization with respect to the direction of $\tilde \w$. Defining $\sigma:=\lnorm{\tilde\w}/{\sqrt{p}}$ and solving the optimization with respect to the direction of $\tilde \w$ leads to,
\begin{equation}
    \label{eq:app_17}
    \min_{\substack{\sigma\geq 0,\alpha\\\u \in \R^p\\}}~\max_{\substack{\beta\geq 0,\gamma\\\v\in \R^p}} ~\frac{1}{p}\psi(\u) + \beta \cdot \sqrt{c_{\kappa}\big(\alpha,\sigma\big)}+\frac{1}{p}\v^T\big(\u - \alpha\w^\star) -\sigma\cdot\lnorm{\frac{\beta}{\sqrt n} \h-\frac{1}{\sqrt p}\v+\frac{\gamma}{\sqrt p} \w^\star}.
\end{equation}
Consequently, we are considering the maximization with respect to the vector variable $\v\in \R^p$. As seen in~\eqref{eq:app_17} this variable appears in the last two additive terms in the objective function. To find the optimal value for $\v$, we introduce a new scalar variable $\tau>0$~\footnote{The square-root trick: This is adopted from~\cite{thrampoulidis2018precise}, where it was proposed in the analysis of the auxiliary optimization in regularized M-estimators, and the idea is to use the following equivalence (which is derived immediately from AM-GM inequality):$$\sqrt{x} = \min_{\tau>0}~ \frac{1}{2\tau}x + \frac{\tau}{2}~,~~\forall x>0.$$}, which simplifies the optimization by changing $\lnorm{\cdot}$ to $\lnorm{\cdot}^2$. The new optimization would be,
\begin{equation}
    \label{eq:app_18}
    \min_{\substack{\sigma\geq 0,\alpha\\\u \in \R^p\\}}~\max_{\substack{\beta\geq 0,\tau> 0,\gamma\\\v\in \R^p}} ~\frac{1}{p}\psi(\u) + \beta \cdot \sqrt{c_{\kappa}\big(\alpha,\sigma\big)}+\frac{1}{p}\v^T\big(\u - \alpha\w^\star) - \frac{\sigma\tau}{2} -\frac{\sigma}{2\tau}\cdot\lnorm{\frac{\beta}{\sqrt n} \h-\frac{1}{\sqrt p}\v+\frac{\gamma}{\sqrt p} \w^\star}^2.
\end{equation}
It can be easily check that the optimization programs~\eqref{eq:app_17} and~\eqref{eq:app_18} are equivalent by simply solving the inner optimization with respect to the variable $\tau$. We are now ready to solve the optimization with respect to $\v$. To do so, we continue by making a completion of squares as follows,
\begin{align}
    \label{eq:app_19}
    &&\frac{1}{p}\v^T\big(\u - \alpha\w^\star) -\frac{\sigma}{2\tau}\cdot\lnorm{\frac{\beta}{\sqrt n} \h-\frac{1}{\sqrt p}\v+\frac{\gamma}{\sqrt p} \w^\star}^2 &= -\frac{\sigma}{2\tau}\cdot\lnorm{\frac{\beta}{\sqrt n} \h-\frac{1}{\sqrt p}\v+\frac{\gamma}{\sqrt p} \w^\star+\frac{\tau}{\sigma\sqrt p}\u-\frac{\alpha\tau}{\sigma\sqrt p}\w^\star}^2\nonumber\\
    &&& ~~~+ \frac{\tau}{2\sigma p}\lnorm{\u-\alpha\w^\star}^2+\frac{\beta}{\sqrt{np}}\u^T\h+\frac{\gamma}{p}\u^T\w^\star-\frac{\alpha\beta\sqrt{\delta}}{p}\h^T\w^\star-\alpha\gamma\kappa^2,\nonumber\\
    &&{\Large{\boxed{p,n\rightarrow +\infty~}}}\qquad\qquad\qquad&= -\frac{\sigma}{2\tau}\cdot\lnorm{\frac{\beta}{\sqrt n} \h-\frac{1}{\sqrt p}\v+\frac{\gamma}{\sqrt p} \w^\star+\frac{\tau}{\sigma\sqrt p}\u-\frac{\alpha\tau}{\sigma\sqrt p}\w^\star}^2\nonumber\\
    &&&~~~+ \frac{\tau}{2\sigma p}\lnorm{\u+\frac{\beta \sigma\sqrt{\delta}}{\tau}~\h+(\frac{\sigma\gamma}{\tau}-\alpha) \w^\star}^2-\frac{\sigma}{2\tau}(\delta\beta^2+\gamma^2\kappa^2),
\end{align}
Where we exploit the fact that, as $p\rightarrow \infty$, we can replace $\frac{1}{p}\lnorm{\w^\star}^2$ and $\frac{1}{p}\lnorm{\h}^2$ with $\kappa^2$ and $1$, respectively. Furthermore, we omit the term $\frac{1}{p} \h^T\w^\star = \mathcal O(\frac{1}{\sqrt p})$ as its negligible compare to other terms in the optimization (which are of constant $\mathcal O(1)$ orders.) Using the above  completion-of-squares $\v$ is now appearing in only one quadratic term in~\eqref{eq:app_19}. Hence, To maximize the objective, $\v$ chooses itself in such a way that it makes the quadratic term equal to zero. This gives the following optimization,
\begin{equation}
    \label{eq:app_20}
    \min_{\substack{\sigma\geq 0,\alpha\\\u \in \R^p\\}}~\max_{\substack{\beta\geq 0,\tau>0, \gamma\\\v\in \R^p}} ~ \beta \cdot \sqrt{c_{\kappa}\big(\alpha,\sigma\big)} - \frac{\sigma\tau}{2} -\frac{\sigma}{2\tau}\big(\delta\beta^2 + \gamma^2\kappa^2\big)+\frac{1}{p}\big[\psi(\u)+ \frac{\tau}{2\sigma}\lnorm{\u+\frac{\beta \sigma\sqrt{\delta}}{\tau}~\h+(\frac{\sigma\gamma}{\tau}-\alpha) \w^\star}^2\big].
\end{equation}
We now switch the order of $\min$ and $\max$ (similar to what we did earlier for $\tilde \w$) and perform the minimization with respect to $\u$. Using the definition of the Moreau envelope, we can write down this optimization in terms of the Moreau envelpe of the potential function. We have,
\begin{equation}
    \label{eq:app_21}
    M_{\psi(\cdot)}\big((\alpha-\frac{\sigma\gamma}{\tau}) \w^\star-\frac{\beta \sigma\sqrt{\delta}}{\tau}~\h,\frac{\sigma}{\tau}\big)~=~\min_{u\in \R^p}~\psi(\u)+ \frac{\tau}{2\sigma}\lnorm{\u+\frac{\beta \sigma\sqrt{\delta}}{\tau}~\h+(\frac{\sigma\gamma}{\tau}-\alpha) \w^\star}^2.
\end{equation}
Using the result of Lemma~\ref{lem:Lipschitz_moreau}, we have that the Moreau envelope is a Lipschitz function as $\psi(\cdot)$ is Lipschitz. Therefore, we can exploit the Gaussian concentration of Lipschitz functions (see Theorem~5.22 in~\cite{vershynin2018high}) which gives,
\begin{equation}
    \label{eq:app_22}
    \frac{1}{p}M_{\psi(\cdot)}\big((\alpha-\frac{\sigma\gamma}{\tau}) \w^\star-\frac{\beta \sigma\sqrt{\delta}}{\tau}~\h,\frac{\sigma}{\tau}\big)\overset{P}\longrightarrow ~\frac{1}{p}\Expect{\big[M_{\psi(\cdot)}\big((\alpha-\frac{\sigma\gamma}{\tau}) \w^\star-\frac{\beta \sigma\sqrt{\delta}}{\tau}~\h,\frac{\sigma}{\tau}\big)\big]}~,~~\text{as }p\rightarrow \infty.
\end{equation}
We now appeal to Lemma 9 in Appendix A of~\cite{thrampoulidis2018precise}, which allows us to replace the Moreau envelope with their expected value due to the convergence we are getting in~\eqref{eq:app_22}. Hence, by replacing the Expected value of the Moreau envelope function, we are getting the following optimization, to be analyzed in the next section.
\begin{equation}
    \label{eq:app_23}
    \min_{\substack{\sigma\geq 0,\alpha}}\max_{\substack{\gamma\\ \beta\geq 0,\tau> 0}} ~\frac{1}{p}\Expect{\big[M_{\psi(\cdot)}\big((\alpha-\frac{\sigma\gamma}{\tau}) \w^\star-\frac{\beta \sigma\sqrt{\delta}}{\tau}~\h,\frac{\sigma}{\tau}\big)\big]} + \beta \sqrt{c_{\kappa}\big(\alpha,\sigma\big)}-\frac{\sigma\tau}{2}-\frac{\sigma}{2\tau}(\delta\beta^2+\gamma^2\kappa^2).
\end{equation}

\subsection{Optimality condition of the auxiliary optimization}\label{sec:opt_cond}

In this section, we conclude the proof of the main result of the paper by showing that (when $\delta>\delta^*$) the optimizer to the scalar optimization~\eqref{eq:app_23} can be derived by solving the nonlinear system of equations~\eqref{eq:nonlinsys}.\\
Here, we investigate the optimality condition for the solution of the auxiliary optimization. In Section~\ref{sec:analyze_aux}, we simplified the (AO) and after some algebra we got the scalar optimization~\eqref{eq:app_23} with respect to five variables. Here, we would like to present the solution to this optimization. Let $C(\alpha,\sigma,\gamma,\beta,\tau)$ denote the objective function in the scalar optimization. In other words, the fucntion $C$ is defined as:
\begin{equation}
    \label{eq:app_24}
    C(\alpha,\sigma,\gamma,\beta,\tau) = \frac{1}{p}\Expect{\big[M_{\psi(\cdot)}\big((\alpha-\frac{\sigma\gamma}{\tau}) \w^\star+\frac{\beta \sigma\sqrt{\delta}}{\tau}~\h,\frac{\sigma}{\tau}\big)\big]} + \beta \sqrt{c_{\kappa}\big(\alpha,\sigma\big)}-\frac{\sigma\tau}{2}-\frac{\sigma}{2\tau}(\delta\beta^2+\gamma^2\kappa^2).
\end{equation}
The following lemma describes the behavior of the function $C$ with respect to its variables.
\begin{lem}
\label{lem:convex_concave}
The function $C:\R^5\rightarrow \R$ defined in~\eqref{eq:app_24} is (jointly) convex with respect to the variables $(\alpha, \sigma)$, and (jointly) concave with respect to the variables $(\gamma, \beta, \tau)$.
\end{lem}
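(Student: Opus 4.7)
The plan is to establish the convex-concave structure of $C$ by going one step back in the derivation---before the Moreau envelope is extracted and before the expectation is evaluated---and exhibiting a joint convex-concave integrand on a product space. For each realization of $(\h,\w^\star)$ I would work with the pre-Moreau objective from~\eqref{eq:app_18},
\begin{equation*}
\Phi(\u,\alpha,\sigma,\v,\gamma,\beta,\tau) := \frac{1}{p}\psi(\u) + \beta\sqrt{c_{\kappa}(\alpha,\sigma)} + \frac{1}{p}\v^T(\u-\alpha\w^\star) - \frac{\sigma\tau}{2} - \frac{\sigma}{2\tau}\lnorm{\frac{\beta}{\sqrt{n}}\h - \frac{1}{\sqrt{p}}\v + \frac{\gamma}{\sqrt{p}}\w^\star}^2,
\end{equation*}
and prove that for every realization, $\Phi$ is jointly convex in $(\u,\alpha,\sigma)$ and jointly concave in $(\v,\gamma,\beta,\tau)$. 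The lemma then follows by the standard partial-min/max calculus together with linearity of expectation, because the scalarized function $C(\alpha,\sigma,\gamma,\beta,\tau)$ of~\eqref{eq:app_24} coincides with $\Expect[\max_\v\inf_\u\Phi]$ after the completion-of-squares cancellations already performed in Section~\ref{sec:analyze_aux}.

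The termwise verification of the convex/concave structure of $\Phi$ is routine except at two places. First, I would argue that $\sqrt{c_\kappa(\alpha,\sigma)}$ is convex in $(\alpha,\sigma)$ by viewing it as an $L^2$-norm: for each $(Z_1,Z_2,Y)$, the quantity $(1-\kappa\alpha Z_1Y-\sigma Z_2)_+$ is the positive part of an affine function of $(\alpha,\sigma)$, hence a nonnegative convex function of $(\alpha,\sigma)$ pointwise. Minkowski's inequality applied to $\sqrt{c_\kappa(\alpha,\sigma)}=\lnorm{(1-\kappa\alpha Z_1Y-\sigma Z_2)_+}_{L^2(\Probe)}$ then delivers joint convexity in $(\alpha,\sigma)$; multiplying by $\beta\ge 0$ preserves this, while the same term is linear (hence concave) in $\beta$. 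Second, for the quadratic $-\frac{\sigma}{2\tau}\lnorm{\cdots}^2$, I would invoke the perspective identity that $(\x,\tau)\mapsto\lnorm{\x}^2/\tau$ is jointly convex on $\R^d\times(0,\infty)$: composing with the affine map $(\v,\beta,\gamma)\mapsto\frac{\beta}{\sqrt{n}}\h-\frac{1}{\sqrt{p}}\v+\frac{\gamma}{\sqrt{p}}\w^\star$ and then multiplying by $-\sigma/2\le 0$ yields a jointly concave function of $(\v,\gamma,\beta,\tau)$. All remaining summands in $\Phi$ are affine in the relevant block, so the desired joint convexity in $(\u,\alpha,\sigma)$ and joint concavity in $(\v,\gamma,\beta,\tau)$ follow by addition.

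Passing from $\Phi$ to $C$ is a short partial-min/max argument. Joint convexity of $\Phi$ in $(\u,\alpha,\sigma)$ gives that $f:=\inf_\u\Phi$ is convex in $(\alpha,\sigma)$; joint concavity of $\Phi$ in $(\v,\gamma,\beta,\tau)$ for each fixed $\u$ gives that the pointwise infimum $f$ is also jointly concave in $(\v,\gamma,\beta,\tau)$, since the pointwise infimum of a family of concave functions is concave. Taking $\max_\v$ then preserves both properties at once: the result is convex in $(\alpha,\sigma)$ (pointwise supremum of convex functions is convex) and concave in $(\gamma,\beta,\tau)$ (partial maximization of a jointly concave function over one block preserves concavity in the remaining block). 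Finally, $\Expect[\cdot]$ over $(\h,\w^\star)$ preserves both properties by linearity, yielding the claim.

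The main anticipated obstacle is the convexity of $\sqrt{c_\kappa(\alpha,\sigma)}$, since the square root of a convex function is not convex in general; the resolution is the $L^2$-norm viewpoint above, which makes Minkowski's inequality applicable. A secondary technical point is that $C$ as stated in~\eqref{eq:app_24} does indeed arise as $\Expect[\max_\v\inf_\u\Phi]$: this is what the derivation in Section~\ref{sec:analyze_aux} already accomplishes, using completion of squares to eliminate $\v$ and the Gaussian concentration of the Lipschitz Moreau envelope (as invoked at~\eqref{eq:app_22}) to pass to the expected value.
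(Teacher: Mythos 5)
Your proposal is correct and follows the same overall architecture as the paper's proof: both work with the pre-Moreau objective $f^{(p)}$ of~\eqref{eq:app_18}, verify termwise that it is jointly convex in $(\u,\alpha,\sigma)$ and jointly concave in $(\v,\gamma,\beta,\tau)$ (with the perspective identity handling the $-\frac{\sigma}{2\tau}\lnorm{\cdot}^2$ term), and then pass to $C$ by partial minimization over $\u$, partial maximization over $\v$, and a limiting/averaging step. The one place where you genuinely diverge is the key sub-lemma that $\sqrt{c_\kappa(\cdot,\cdot)}$ is jointly convex: the paper (Lemma~\ref{lem:c_kappa_convex}) proves this by constructing the empirical approximants $f_\kappa^{(n)}(s,r)=\frac{1}{\sqrt n}\lnorm{(\mathbf 1_n - s\kappa\h\y + r\g\y)_+}$, noting each is convex as a norm of a positive part composed with an affine map, and invoking the LLN plus the fact that pointwise limits of convex functions are convex; you instead read $\sqrt{c_\kappa(s,r)}$ directly as the $L^2(\Probe)$-norm of the pointwise-convex, nonnegative random variable $(1-\kappa s Z_1Y - rZ_2)_+$ and apply monotonicity of the norm on nonnegative functions together with Minkowski's inequality. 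Your route is more direct and avoids the limiting argument; the paper's route has the mild advantage of reusing the same finite-$n$ objects that appear in the CGMT reduction. Similarly, at the last step the paper argues via convergence in probability of $C^{(p)}$ to $C$ (limits of convex-concave functions are convex-concave), whereas you invoke linearity of expectation; both are fine, though your identification of $C$ with $\Expect[\max_{\v}\inf_{\u}\Phi]$ implicitly absorbs the asymptotic replacements $\frac{1}{p}\lnorm{\w^\star}^2\to\kappa^2$ and $\frac{1}{p}\lnorm{\h}^2\to 1$ made in~\eqref{eq:app_19}, which you correctly flag as resting on the derivation of Section~\ref{sec:analyze_aux}.
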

The proof of this lemma is provided in Appendix~\ref{sec:app_pf_lem}. Using the result of Theorem~\ref{thm:phase_transition}, the objective function, $C$, will diverge when $\delta<\delta^*$. For $\delta>\delta^*$ Lemma~\ref{lem:convex_concave} states that the function $C$ is convex-concave. The following remark indicates that the optimal solution of the optimization problem does not happen at the boundry values.
\begin{rem}
We need to show that the optimal solution does not happen at the boundary, i.e., at $\beta = 0$, or $\sigma=0$. Taking the derivative with respect to $\beta$ at the objective function in~\eqref{eq:app_18}, we will have $\frac{\partial}{\partial \beta}|_{\beta=0}=\sqrt{c_{\kappa}(\alpha, \sigma)}>0 $. Therefore, the optimal $\beta$ is nonzero. It can also be seen in the same optimization program that when $\sigma=0$, $\beta$ can choose its value arbitrary large and the optimal value would be $+\infty$. Hence, the optimal $\sigma$ is also nonzero as we have a minimization w.r.t. $\sigma$.
\end{rem}
Let $(\bar \alpha, \bar \sigma, \bar \gamma, \bar \beta, \bar \tau)$ denote the solution to the optimization~\eqref{eq:app_23}. Since the objective function is smooth with respect to its variables and the optimal values do not coincide with the boundries, its solution must satisfy the first-order optimality condition, i.e., $\nabla C\big(\bar \alpha, \bar \sigma, \bar \gamma, \bar \beta, \bar \tau\big) = \mathbf 0_{5\times 1}$. We will show that this would simplify to our system of nonlinear equations~\eqref{eq:nonlinsys}. \\
We start by setting the derivative with respect to $\alpha$ to zero. We have,
\begin{equation}
    \label{eq:app_25}
    \frac{\partial C}{\partial \alpha} = 0 \Rightarrow \frac{1}{p}\Expect{\big[\frac{\partial}{\partial \alpha}M_{\psi(\cdot)}\big((\alpha-\frac{\sigma\gamma}{\tau}) \w^\star+\frac{\beta \sigma\sqrt{\delta}}{\tau}~\h,\frac{\sigma}{\tau}\big)\big]} + \frac{\beta}{2\sqrt{c_{\kappa}\big(\alpha,\sigma\big)}}\cdot \frac{\partial c_{\kappa}\big(\alpha,\sigma\big)}{\partial \alpha}~=0~,
\end{equation}
where we used the Leibniz integral rule to bring the derivative inside the expectation. Using the result of Lemma~\ref{lem:der_moreau}, we can write the following,
\begin{equation}
    \label{eq:app_26}
    \frac{1}{p}\Expect{\big[\frac{\partial}{\partial \alpha}M_{\psi(\cdot)}\big((\alpha-\frac{\sigma\gamma}{\tau}) \w^\star+\frac{\beta \sigma\sqrt{\delta}}{\tau}~\h,\frac{\sigma}{\tau}\big)\big]} = \frac{\kappa^2\alpha\tau}{\sigma} - \kappa^2\gamma -\frac{\tau}{p\sigma}\Expect\big[ {\w^\star}^T\Prox_{\frac{\sigma}{\tau}\psi(\cdot)}((\alpha-\frac{\sigma\gamma}{\tau}) \w^\star+\frac{\beta \sigma\sqrt{\delta}}{\tau}~\h\big)\big].
\end{equation}
Replacing~\eqref{eq:app_26} in~\eqref{eq:app_25} gives the following nonlinear equation,
\begin{equation}
    \label{eq:app_27}
    \frac{\tau}{p\sigma}\Expect\big[ {\w^\star}^T\Prox_{\frac{\sigma}{\tau}\psi(\cdot)}((\alpha-\frac{\sigma\gamma}{\tau}) \w^\star+\frac{\beta \sigma\sqrt{\delta}}{\tau}~\h\big)\big] = \frac{\kappa^2\alpha\tau}{\sigma} - \kappa^2\gamma + \frac{\beta}{2\sqrt{c_{\kappa}\big(\alpha,\sigma\big)}}\cdot \frac{\partial c_{\kappa}\big(\alpha,\sigma\big)}{\partial \alpha}
\end{equation}
Next, we find another optimality condition by setting the derivative with respect to $\beta$ to zero. We have,
\begin{equation}
    \label{eq:app_28}
    \frac{\partial C}{\partial \beta} = 0 \Rightarrow \frac{1}{p}\Expect{\big[\frac{\partial}{\partial \beta}M_{\psi(\cdot)}\big((\alpha-\frac{\sigma\gamma}{\tau}) \w^\star+\frac{\beta \sigma\sqrt{\delta}}{\tau}~\h,\frac{\sigma}{\tau}\big)\big]} + \sqrt{c_{\kappa}\big(\alpha,\sigma\big)} - \frac{\sigma\delta}{\tau}\beta ~=0~,
\end{equation}
Similar to~\eqref{eq:app_26}, we can compute the expected derivative of the Moreau envelope function by appealing to Lemma~\ref{lem:der_moreau},
\begin{equation}
    \label{eq:app_29}
    \frac{1}{p}\Expect{\big[\frac{\partial}{\partial \beta}M_{\psi(\cdot)}\big((\alpha-\frac{\sigma\gamma}{\tau}) \w^\star+\frac{\beta \sigma\sqrt{\delta}}{\tau}~\h,\frac{\sigma}{\tau}\big)\big]} = \frac{\beta\sigma\delta}{\tau}-\frac{\sqrt \delta}{p}\Expect\big[ \h^T\Prox_{\frac{\sigma}{\tau}\psi(\cdot)}((\alpha-\frac{\sigma\gamma}{\tau}) \w^\star+\frac{\beta \sigma\sqrt{\delta}}{\tau}~\h\big)\big].
\end{equation}
Replacing~\eqref{eq:app_29} in~\eqref{eq:app_28} will give the following nonlinear equation:
\begin{equation}
    \label{eq:app_30}
    \boxed{
    \frac{1}{p}\Expect\big[ \h^T\Prox_{\frac{\sigma}{\tau}\psi(\cdot)}((\alpha-\frac{\sigma\gamma}{\tau}) \w^\star+\frac{\beta \sigma\sqrt{\delta}}{\tau}~\h\big)\big] = \sqrt{\frac{c_{\kappa}\big(\alpha,\sigma\big)}{\delta}}~.
    }
    \tag{E2}
\end{equation}
Next, we compute the derivative with respect to $\gamma$ and set it to zero. We have,
\begin{equation}
    \label{eq:app_31}
    \frac{\partial C}{\partial \gamma} = 0 \Rightarrow \frac{1}{p}\Expect{\big[\frac{\partial}{\partial \gamma}M_{\psi(\cdot)}\big((\alpha-\frac{\sigma\gamma}{\tau}) \w^\star+\frac{\beta \sigma\sqrt{\delta}}{\tau}~\h,\frac{\sigma}{\tau}\big)\big]} -\frac{\sigma\kappa^2\gamma}{\tau} ~=0~,
\end{equation}
\begin{equation}
    \label{eq:app_32}
    \frac{1}{p}\Expect{\big[\frac{\partial}{\partial \gamma}M_{\psi(\cdot)}\big((\alpha-\frac{\sigma\gamma}{\tau}) \w^\star+\frac{\beta \sigma\sqrt{\delta}}{\tau}~\h,\frac{\sigma}{\tau}\big)\big]} = \frac{\sigma\kappa^2\gamma}{\tau}-\kappa^2\alpha-\frac{1}{p}\Expect\big[ {\w^\star}^T\Prox_{\frac{\sigma}{\tau}\psi(\cdot)}((\alpha-\frac{\sigma\gamma}{\tau}) \w^\star+\frac{\beta \sigma\sqrt{\delta}}{\tau}~\h\big)\big].
\end{equation}
Replacing~\eqref{eq:app_32} in~\eqref{eq:app_31} will give the following nonlinear equation:
\begin{equation}
    \label{eq:app_33}
    \boxed{
    \frac{1}{p}\Expect\big[ {\w^\star}^T\Prox_{\frac{\sigma}{\tau}\psi(\cdot)}((\alpha-\frac{\sigma\gamma}{\tau}) \w^\star+\frac{\beta \sigma\sqrt{\delta}}{\tau}~\h\big)\big] = \kappa^2\alpha~.
    }
    \tag{E1}
\end{equation}
Also, replacing~\eqref{eq:app_33} in the nonlinear equation~\eqref{eq:app_27} gives the following nonlinear equation:
\begin{equation}
    \label{eq:app_34}
    \boxed{
    \frac{\partial c_{\kappa}(\alpha,\sigma)}{\partial \alpha} = \frac{2\kappa^2\gamma}{\beta}\sqrt{c_{\kappa}(\alpha,\sigma)}~.
    }
    \tag{E4}
\end{equation}
Next, we take the derivative with respect to $\sigma$. We have:
\begin{equation}
    \label{eq:app_35}
    \frac{\partial C}{\partial \sigma} = 0\Rightarrow \frac{1}{p}\Expect{\big[\frac{\partial}{\partial \sigma}M_{\psi(\cdot)}\big((\alpha-\frac{\sigma\gamma}{\tau}) \w^\star+\frac{\beta \sigma\sqrt{\delta}}{\tau}~\h,\frac{\sigma}{\tau}\big)\big]}+\frac{\beta}{2\sqrt{c_{\kappa}(\alpha,\sigma)}} \frac{\partial c_{\kappa}(\alpha,\sigma)}{\partial \sigma}-\frac{\tau}{2}-\frac{1}{2\tau}(\delta\beta^2+\gamma^2\kappa^2) = 0,
\end{equation}
We use the result of the Lemma~\ref{lem:der_moreau} to compute the derivative of $M_{\psi}(\cdot, \cdot)$ with respect to $\sigma$. We have,
\begin{align}
    \label{eq:app_36}
    \frac{1}{p}\Expect{\big[\frac{\partial}{\partial \sigma}M_{\psi(\cdot)}\big((\alpha-\frac{\sigma\gamma}{\tau}) \w^\star+\frac{\beta \sigma\sqrt{\delta}}{\tau}~\h,\frac{\sigma}{\tau}\big)\big]} = \frac{1}{2\tau}\big({\gamma^2\kappa^2}+\delta\beta^2 + \frac{\alpha^2\kappa^2\tau^2}{\sigma^2} - \frac{\tau^2}{p\sigma^2}\Expect\lnorm{\Prox_{\frac{\sigma}{\tau}\psi(\cdot)}((\alpha-\frac{\sigma\gamma}{\tau}) \w^\star+\frac{\beta \sigma\sqrt{\delta}}{\tau}~\h\big)}^2\big)
\end{align}
Replacing this into~\eqref{eq:app_35} will give the following equation,
\begin{equation}
    \label{eq:app_37}
    \frac{1}{p}\Expect\lnorm{\Prox_{\frac{\sigma}{\tau}\psi(\cdot)}((\alpha-\frac{\sigma\gamma}{\tau}) \w^\star+\frac{\beta \sigma\sqrt{\delta}}{\tau}~\h\big)}^2=\alpha^2\kappa^2+\frac{\beta \sigma^2}{\tau\sqrt{c_{\kappa}(\alpha,\sigma)}} \frac{\partial c_{\kappa}(\alpha,\sigma)}{\partial \sigma}-\sigma^2
\end{equation}
Similarly, by taking the derivative with respect to $\tau$, we have:
\begin{equation}
    \label{eq:app_38}
    \boxed{
    \frac{1}{p}\Expect\lnorm{\Prox_{\frac{\sigma}{\tau}\psi(\cdot)}((\alpha-\frac{\sigma\gamma}{\tau}) \w^\star+\frac{\beta \sigma\sqrt{\delta}}{\tau}~\h\big)}^2 = \alpha^2\kappa^2 + \sigma^2}
    \tag{E3}
\end{equation}
We can now simplify~\eqref{eq:app_37} to get the following equation:
\begin{equation}
    \label{eq:app_39}
    \boxed{\frac{\partial c_{\kappa}(\alpha,\sigma)}{\partial \sigma} = \frac{2\tau\sqrt{c_{\kappa}(\alpha,\sigma)}}{\beta }}
    \tag{E5}
\end{equation}
Finally, we make a change of variable by replacing $\tau$ with $\frac{1}{\tau}$ in the equations~\eqref{eq:app_33},~\eqref{eq:app_30},~\eqref{eq:app_38},\eqref{eq:app_34}, and~\eqref{eq:app_39} will respectively give the desired equations in the system of nonlinear equations~\eqref{eq:nonlinsys} as the optimality condition on the solution of the optimization~\eqref{eq:app_23}. This concludes the proof.

\section{Proof of Theorem~\ref{thm:phase_transition}}
In this section we prove the result presented in Theorem~\ref{thm:phase_transition} which identifies the phase transition on the separability of the data. To this end, we exploit the result of Lemma~\ref{lem:define_aux} which associates the following optimization to the GMM optimization~\eqref{eq:app_main_opt}.
\begin{equation}
\begin{aligned}
    \label{eq:app_40}
    &&&\min_{\substack{\alpha\in \R\\\tilde\w \in \R^p\\\tilde \w\perp \w^\star}} ~\frac{1}{p}\psi(\alpha\w^\star+\tilde \w)\\ 
    &&&~\quad\qquad\text{s.t.}~~~\frac{1}{p}(\h^T\tilde \w)^2\geq n\cdot c_{\kappa}\big(\alpha, \frac{\lnorm{\tilde \w}}{\sqrt{p}}\big).
\end{aligned}
\end{equation}
We first show that, as $p,n\rightarrow \infty$, $\delta>\delta^*=\delta^*(\kappa)$ is the necessary and sufficient condition for the optimization program~\eqref{eq:app_40} to have a feasible solution. Define $\sigma:=\lnorm{\tilde \w}{\sqrt p}$, and write the following:
\begin{equation}
    \label{eq:app_41}
    \sup_{\substack{\alpha\in \R\\\tilde \w \perp \w^\star}}~\frac{1}{p}(\h^T\tilde \w)^2- n\cdot c_{\kappa}\big(\alpha, \frac{\lnorm{\tilde \w}}{\sqrt{p}}\big) =  \sup_{\sigma\geq 0, \alpha}~\sigma^2\cdot\lnorm{\P^{\perp}\h}^2 - n\cdot c_{\kappa}\big(\alpha, \sigma\big)~.
\end{equation}
Note that we used the fact that the $\P^\perp$ is the projection onto the hyperplane orthogonal to $\w^\star$. The supremum is achieved iff $\tilde \w$ chooses its direction to be the same as $\P^\perp \h$. The optimization program has a feasible point if and only if the optimal value in~\eqref{eq:app_41} be nonnegative. In other words, the necessary and sufficient condition on the separability of the data is:
\begin{equation}
    \label{eq:app_42}
    \exists~r\geq 0 ,~s~,~~\text{s.t. }~ r^2\cdot\lnorm{\P^{\perp}\h}^2 - n\cdot c_{\kappa}\big(s, r\big)\geq 0 ~\Longleftrightarrow~ \frac{1}{n}\lnorm{\P^\perp\h}^2\geq \delta^*= \underset{s,r\geq 0}\inf~\frac{c_{\kappa}(s,r)}{r^2}.
\end{equation}
Next we note that $\h$ has i.i.d. $\normal(0,1)$ entries, therefore, SLLN asserts that,
\begin{equation}
    \label{eq:app_43}
    \frac{1}{n}\lnorm{\P^\perp\h}\overset{\text~{a.s.}}\longrightarrow~ \frac{p-1}{n}.
\end{equation}
Therefore, as $n,p\rightarrow \infty$ with $\delta:=\frac{p}{n}$, the optimization program~\eqref{eq:app_40} is feasible if and only if $\delta>\delta^*$.As Lemma~\ref{lem:define_aux} states that the solution to the GMM optimization~\eqref{eq:app_main_opt} converges in probability to the solution of~\eqref{eq:app_40}. Therefore, $\delta>\delta^*$ indicates the phase transition for the existence of the GMM classifier.

We would also want to refer the interested reader to~\cite{cover1965geometrical} for an astute geometric/combinatorial perspective on the phase transition behavior in binary classification.

\section{Proof of Lemma~\ref{lem:convex_concave}}\label{sec:app_pf_lem}
Consider the objective function in the optimization program~\ref{eq:app_18}, i.e., 
\begin{equation}
f^{(p)}\big(\alpha,\sigma,\u;\gamma,\beta,\tau,\v\big)=\frac{1}{p}\psi(\u) + \beta \cdot \sqrt{c_{\kappa}\big(\alpha,\sigma\big)}+\frac{1}{p}\v^T\big(\u - \alpha\w^\star) - \frac{\sigma\tau}{2} -\frac{\sigma}{2\tau}\cdot\lnorm{\frac{\beta}{\sqrt n} \h-\frac{1}{\sqrt p}\v+\frac{\gamma}{\sqrt p} \w^\star}^2.
\end{equation}
First, we would like to show that $f^{(p)}$ is jointly convex with respect to $\alpha$, $\sigma$ and $\u$. From Lemma~\eqref{lem:c_kappa_convex}, we know that $\sqrt{c_{\kappa}\big(\alpha, \sigma\big)}$ is jointly convex with respect to $\alpha$ and $\sigma$. The function $\psi(\cdot)$ is also convex and the remaining terms are all linear with respect to these three variables. Hence, $f^{(p)}$ is convex with respect to $\u$, $\alpha$ and $\sigma$. \\
Next, we show that this function is jointly concave with respect  to the remaining variables. We note that the function $\lnorm{\frac{\beta}{\sqrt n} \h-\frac{1}{\sqrt p}\v+\frac{\gamma}{\sqrt p} \w^\star}^2$ is convex with respect to variables $\v$, $\gamma$, and $\beta$. The perspective of this function $\frac{1}{\tau}\lnorm{\frac{\beta}{\sqrt n} \h-\frac{1}{\sqrt p}\v+\frac{\gamma}{\sqrt p} \w^\star}^2$ is (jointly) convex with respect to $(\gamma,\beta,\tau, \v)$. Therefore, $f^{(p)}$ is jointly convex with respect to these variables as the remaining terms are affine with respect to $(\gamma,\beta,\tau, \v)$. Next, we define the function $C^{(p)}$ by maximizing $f^{(p)}$ with respect to $\v$, i.e.,
\begin{equation}
    C^{(p)}\big(\alpha,\sigma,\u;\gamma,\beta,\tau \big)= \max_{\v \in \R^p}f^{(p)}\big(\alpha,\sigma,\u;\gamma,\beta,\tau,\v\big)
\end{equation}
This function is also jointly convex-concave, since it is a point-wise maximum of concave function with repect to $\v$. The result is the consequence of the fact that $C^{(p)}$ converges to $C$, i.e., \begin{equation}
  C^{(p)}\big(\alpha,\sigma,\u;\gamma,\beta,\tau \big)\overset{P}\rightarrow C(\alpha,\sigma, \gamma,\beta,\tau)~,~~\text{as }~p\rightarrow \infty .  
\end{equation}  

\section{GMM for Various Structures}
In this section, we provide some technical details on how to characterize the performance of the classifiers introduced in Section~\ref{sec:structured}. For each of the three classifiers, depending on the distribution of the underlying parameter ($\w^\star$) we simplify the nonlinear system~\eqref{eq:nonlinsys} by explicitly evaluating the expected values. 
\subsection{Max-margin classifier ($\ell_2$-GMM)}
As mentioned earlier, when $\psi(\cdot)=\frac{1}{2}\lnorm{\cdot}_2^2$, the GMM classifier will become the well-known max-margin classifier. In this case, we can find the following closed-form for the proximal operator:
\begin{equation}
    \label{eq:app_44}
    \text{Prox}_{\frac{t}{2}\lnorm{\cdot}^2} (\v) = \frac{1}{1+t} \v. 
\end{equation}
Therefore, the expectations in the nonlinear system~\eqref{eq:nonlinsys} can be computed explicitly as follows:
\begin{equation}
    \label{eq:app_45}
    \begin{cases}
    \begin{aligned}
    &&\frac{1}{p}\Expect\big[ {\w^\star}^T\Prox_{\frac{\sigma\tau}{2}\lnorm{\cdot}^2}\big((\alpha-\sigma\tau\gamma)\w^\star+\beta \sigma\tau\sqrt\delta \h\big)\big] &= \frac{\kappa^2\big(\alpha-\sigma\tau\gamma\big)}{1+\sigma\tau},\\
    &&\frac{1}{p}\Expect\big[ \h^T\Prox_{\sigma\tau\psi(\cdot)}\big((\alpha-\sigma\tau\gamma)\w^\star+\beta \sigma\tau\sqrt\delta \h\big)\big] &= \frac{\beta\sigma\tau\sqrt{\delta}}{1+\sigma\tau},\\
    &&\frac{1}{p}\Expect\lnorm{\Prox_{\sigma\tau\psi(\cdot)}\big((\alpha-\sigma\tau\gamma)\w^\star+\beta \sigma\tau\sqrt\delta \h\big)}^2 &= \frac{\kappa^2\big(\alpha-\sigma\tau\gamma\big)^2+\beta^2\sigma^2\tau^2\delta}{\big(1+\sigma\tau\big)^2}.
    \end{aligned}
    \end{cases}
\end{equation}
Replacing these evaluations into the first three equations in the nonlinear system~\eqref{eq:nonlinsys}, will explicitly give two of the variables in terms of the other three variables. More specifically, we get $\gamma = -\alpha$ from the first equation, and $\beta = \frac{1+\sigma\tau}{\tau\sqrt \delta}$ from the thrid equation in the nonlinar system~\eqref{eq:nonlinsys}. Hence, the nonlinear system would reduce to solving the following system of $3$ nonlinear equations with $3$ unknowns:
\begin{equation}
\label{eq:app_nonlinsys_l2}
    \begin{cases}
    \begin{aligned}
    &&\sqrt{c_{\kappa}(\alpha,\sigma)} &= \sigma\sqrt{\delta},\\
        &&\frac{\partial c_{\kappa}(\alpha,\sigma)}{\partial \alpha} &= \frac{-2\kappa^2\alpha\tau\sigma\delta}{1+\sigma\tau},\\
    &&\frac{\partial c_{\kappa}(\alpha,\sigma)}{\partial \sigma} &= \frac{2\sigma\delta}{1+\sigma\tau}.
    \end{aligned}
    \end{cases}
\end{equation}
\subsection{Sparse classifier ($\ell_1$-GMM)}\label{sec:app_prox_l1}
The second choice for the potential function is $\psi(\cdot) = \lnorm{\cdot}_1$, which is used to promote sparsity in the underlying parameter. Here, we assume that the entries of the underlying parameter are generated independently from the distribution $\Pi_s$ introduced in~\eqref{eq:sparse_dist}, where $s\in(0,1)$ denotes the sparsity factor which indicates the probability of an entry being nonzero. The nonzero entries have Gaussian distribution with variance $\kappa^2/s$. The proximal operator for $\ell_1$ norm can be computed explicitly as,
\begin{equation}
    \label{eq:app_prox_l1}
    \text{Prox}_{t\lnorm{\cdot}_1}(\u) = \eta(\u,t),
\end{equation}
where $\eta(x,t) = \frac{x}{|x|}\big(|x|-t\big)_+$ is the soft tresholding function that has been applied entrywise. The expectations that appear in the first three equations in the nonlinear system~\eqref{eq:nonlinsys} can be presented as follows:
\begin{equation}
    \label{eq:app_46}
    \begin{cases}
    \begin{aligned}
    &&\frac{1}{p}\Expect\big[ {\w^\star}^T\Prox_{\frac{\sigma\tau}{2}\lnorm{\cdot}^2}\big((\alpha-\sigma\tau\gamma)\w^\star+\beta \sigma\tau\sqrt\delta \h\big)\big] &=~2\kappa^2\cdot Q(t_1)\cdot\big(\alpha-\sigma\tau\gamma\big) ,\\
    &&\frac{1}{p}\Expect\big[ \h^T\Prox_{\sigma\tau\psi(\cdot)}\big((\alpha-\sigma\tau\gamma)\w^\star+\beta \sigma\tau\sqrt\delta \h\big)\big] &=  \big[2sQ(t_1) +2(1-s)Q(t_2) \big]\cdot \beta\sigma\tau\sqrt{\delta},\\
    &&\frac{1}{p}\Expect\lnorm{\Prox_{\sigma\tau\psi(\cdot)}\big((\alpha-\sigma\tau\gamma)\w^\star+\beta \sigma\tau\sqrt\delta \h\big)}^2 &= 2\sigma^2\tau^2\big(\frac{s}{t_1^2}\cdot\chi(t_1)+\frac{1-s}{t_2^2}\cdot\chi(t_2)\big), 
    \end{aligned}
    \end{cases}
\end{equation}
where $t_1$ and $t_2$ are defined as,
\begin{equation}
    \label{eq:app_47}
    t_1 = \frac{\sigma\tau}{\sqrt{\frac{\kappa^2}{s}(\alpha-\sigma\tau\gamma)^2 + \beta^2\sigma^2\tau^2\delta}}~,~~ t_2 =\frac{1}{\beta\sqrt{\delta}},
\end{equation}
and the function $\chi:\R\rightarrow \R_+$ is defined as:
\begin{equation}
    \label{eq:app_48}
    \chi(t) = \Expect[\big(Z-t\big)_+^2] = Q(t)\big(1+t^2\big)-t\phi(t)~,
\end{equation}
where the random variable $Z$ in the above expectation have standard normal distribution, and $\phi(x) = \frac{1}{\sqrt{2\pi}}\exp(-x^2/2)$ denotes the density of the standard normal distribution. Replacing the computed expectations in~\eqref{eq:app_46} in the nonlinear system~\eqref{eq:nonlinsys} gives the sparse nonlinear system presented in~\eqref{eq:nonlinsys_l1}.\\
It is worth mentioning that the sparse nonlinear system~\eqref{eq:nonlinsys_l1} can be solved efficiently via iterative numerical methods. A main advantage of the sparse nonlinear system is that it has be presented in terms of the $Q(\cdot)$ function which can be computed quickly in most numerical softwares (e.g. MATLAB). For our numerical simulations in Section~\ref{sec:num_sim} we used an accelerated fixed-point iterative method to find the solution of the nonlinear system. 

\subsection{Binary classifier ($\ell_{\infty}$-GMM)}\label{sec:app_bin_classifier}
The third and last choice of the potential function is the $\ell_\infty$ norm. In this case the potential function is defined as $\psi(\cdot) = p\lnorm{\cdot}_{\infty}$\footnote{The multiplication by the dimension, $p$, is necessary to ensure that all the terms in the optimization have constant ($\mathcal O(1)$) order.}. The following lemma determines how to compute  the proximal operator in this case.
\begin{lem}
\label{lem:prox_l_infty}
Let $\u\in\R^p$ have i.i.d. entries from a distribution $\Pi$. Then, for $t>0$, we have:
\begin{equation}
    \label{eq:app_49}
    \text{Prox}_{tp\lnorm{\cdot}_\infty}\big(\u\big) = \u - \text{Prox}_{\lambda \lnorm{\cdot}_1}(\u),
\end{equation}
where $\lambda$ is defined as,
\begin{enumerate}
    \item for $t\leq \Expect{|W|}$, $\lambda$ is the unique solution of $\Expect{\big[\big(|W|-\lambda\big)_+\big]} = t$.
    \item for $t\geq \Expect{|W|}$, then $\lambda=0$.
\end{enumerate}
\end{lem}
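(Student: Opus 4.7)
The plan is to derive the identity from three ingredients: Moreau's decomposition, which identifies the proximal operator of the $\ell_\infty$ norm with the residual of a projection onto an $\ell_1$ ball; the explicit soft-thresholding form of that projection; and the Strong Law of Large Numbers, which converts the empirical equation fixing the threshold into the expected-value condition stated in the lemma.

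First I would apply Moreau's decomposition to $f(\x)=tp\lnorm{\x}_\infty$. Its Fenchel conjugate is the indicator of the scaled $\ell_1$ ball $B_1(tp):=\{\y\in\R^p:\lnorm{\y}_1\le tp\}$, so the identity $\x=\Prox_{f}(\x)+\Prox_{f^*}(\x)$ specialises to
\begin{equation*}
\Prox_{tp\lnorm{\cdot}_\infty}(\u)\;=\;\u-\mathcal P_{B_1(tp)}(\u),
\end{equation*}
where $\mathcal P_{B_1(tp)}$ denotes Euclidean projection onto $B_1(tp)$. I would then invoke the classical characterisation of projection onto the $\ell_1$ ball: if $\lnorm{\u}_1\le tp$ then $\mathcal P_{B_1(tp)}(\u)=\u$ and one may take $\lambda^{(p)}=0$; otherwise a KKT computation shows $\mathcal P_{B_1(tp)}(\u)=\eta(\u,\lambda^{(p)})$, with $\eta$ the entrywise soft-thresholding operator and $\lambda^{(p)}>0$ the unique scalar satisfying $\sum_{i=1}^p(|u_i|-\lambda^{(p)})_+=tp$ (uniqueness follows from strict monotonicity of this sum in $\lambda$ wherever it is positive). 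Since $\Prox_{\lambda\lnorm{\cdot}_1}(\u)=\eta(\u,\lambda)$, this already produces, for every finite $p$, the identity $\Prox_{tp\lnorm{\cdot}_\infty}(\u)=\u-\Prox_{\lambda^{(p)}\lnorm{\cdot}_1}(\u)$ with the threshold defined through the empirical equation $\frac{1}{p}\sum_i(|u_i|-\lambda^{(p)})_+=t$.

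The final step is to pass to the limit $p\to\infty$ using that $u_1,\dots,u_p$ are i.i.d.\ copies of $W\sim\Pi$. The Strong Law of Large Numbers gives $\frac{1}{p}\sum_{i=1}^p(|u_i|-\lambda)_+\to\Expect[(|W|-\lambda)_+]$ almost surely for each $\lambda\ge 0$, with convergence upgraded to uniformity on compact sets of $\lambda$ via monotonicity (a Glivenko--Cantelli type argument). The limiting function $g(\lambda):=\Expect[(|W|-\lambda)_+]$ is continuous and non-increasing with $g(0)=\Expect|W|$ and $g(\lambda)\to 0$ as $\lambda\to\infty$, and is strictly decreasing wherever positive when $W$ is non-degenerate. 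Hence when $t\le\Expect|W|$ the equation $g(\lambda)=t$ admits the unique root stated in case~1 and $\lambda^{(p)}\to\lambda$ almost surely, while when $t>\Expect|W|$ the empirical equation has no positive root for all large $p$ and $\lambda^{(p)}=0$, which is case~2.

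The main obstacle will be transferring convergence of the scalar threshold $\lambda^{(p)}$ into convergence of the proximal vector itself in a mode compatible with how the identity is used downstream (inside expressions such as $\frac{1}{p}\Expect[{\w^\star}^T\mathbf P]$). This should follow from the $1$-Lipschitz dependence of $\eta(\u,\cdot)$ on its threshold argument, since $|\eta(u_i,\lambda^{(p)})-\eta(u_i,\lambda)|\le|\lambda^{(p)}-\lambda|$ entrywise, but some care is needed to argue this uniformly in $i$ or in an appropriate empirical-distribution sense, in order to justify the ensuing substitution into the nonlinear system~\eqref{eq:nonlinsys}.
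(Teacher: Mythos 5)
Your proposal is correct and follows exactly the route the paper intends: the paper states in Section 4.3 that the proximal operator of the $\ell_\infty$ norm "can be derived by projecting the points onto the $\ell_1$-ball" (i.e., Moreau decomposition), identifies that projection with entrywise soft-thresholding at a data-dependent threshold, and implicitly passes from the empirical normalization $\frac{1}{p}\sum_i(|u_i|-\lambda)_+=t$ to the expectation $\Expect[(|W|-\lambda)_+]=t$ via the law of large numbers; the appendix gives no further proof. Your write-up fills in these steps faithfully, and your closing remark about the Lipschitz dependence of $\eta(\u,\cdot)$ on the threshold correctly identifies the only point needing care when substituting the limit into the nonlinear system.
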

In the following subsections, we use the result of Lemma~\ref{lem:prox_l_infty} to compute the proximal operator for two different models (i.e., two different distributions on the entries of $\w^\star$.)
\subsubsection{$\ell_{\infty}$-GMM with sparse parameter}
Here, we consider the case where the entries of $\w^\star$ are drawn independently from the distribution $\Pi_s$ defined in~\eqref{eq:sparse_dist}. Note that when we set $s$ to $1$ this distribution will be the same as i.i.d. Gaussian entries. Hence, the result in this section can be applied to the non-sparse setting (when the underlying parameter has i.i.d. Gaussian entries.) \\
Using the result of Lemma~\ref{lem:prox_l_infty}, in this case the proximal operator can be computed as follows,
\begin{equation}
    \label{eq:app_50}
    \text{Prox}_{\sigma\tau p \lnorm{\cdot}_{\infty}}\big((\alpha-\sigma\tau\gamma)\w^\star+\beta \sigma\tau\sqrt\delta \h\big) = (\alpha-\sigma\tau\gamma)\w^\star+\beta \sigma\tau\sqrt\delta \h - \text{Prox}_{\lambda\sigma\tau \lnorm{\cdot}_{1}}\big((\alpha-\sigma\tau\gamma)\w^\star+\beta \sigma\tau\sqrt\delta \h\big).
\end{equation}
where $\lambda$ is defined in terms of the proxies $t_1$ and $t_2$ (defined in~\eqref{eq:app_47}):
\begin{enumerate}
\item If $\frac{s}{t_1}+\frac{1-s}{t_2}>\sqrt{\frac{\pi}{2}}$, then $\lambda$ is the unique solution of the following nonlinear equation:
\begin{equation}
    \label{eq:app_51}
    2s\cdot\big[\frac{1}{t_1}\phi(\lambda t_1) - \lambda Q(\lambda t_1)\big] + 2(1-s)\big[\frac{1}{t_2}\phi(\lambda t_2) - \lambda Q(\lambda t_2)\big] =1~.
\end{equation}
\item If $\frac{s}{t_1}+\frac{1-s}{t_2}\leq\sqrt{\frac{\pi}{2}}$, then $\lambda=0$.
\end{enumerate}
Therefore, after finding the value of $\lambda$ by solving equation~\eqref{eq:app_51}, the proximal operator which appears in the first three equations of the nonlinear system~\eqref{eq:nonlinsys} can be written explicitly in terms of the proximal operator of the $\ell_1$ norm which was illustrated in Section~\ref{sec:app_prox_l1}. Also, similar to the case of $\ell_1$-GMM, the expectations are written in terms of the functions $Q(\cdot)$, and $\phi(\cdot)$. Therefore, the solution to the nonlinear system can be found efficiently using numerical solvers.
\subsubsection{$\ell_{\infty}$-GMM with binary parameter}
Here, we consider the case where $\w^\star$ has i.i.d. entries  with distribution $\Pi = \kappa\cdot\Rad(\frac{1}{2})$. To simplify our presentation, we define the following proxy:
$$t_3 =  \big(\frac{\alpha}{\sigma\tau}-\gamma\big)\cdot\kappa~.$$
Using the result of Lemma~\ref{lem:prox_l_infty}, in this case the proximal operator can be computed as follows,
\begin{equation}
    \label{eq:app_52}
    \text{Prox}_{\sigma\tau p \lnorm{\cdot}_{\infty}}\big((\alpha-\sigma\tau\gamma)\w^\star+\beta \sigma\tau\sqrt\delta \h\big) = (\alpha-\sigma\tau\gamma)\w^\star+\beta \sigma\tau\sqrt\delta \h - \text{Prox}_{\lambda\sigma\tau \lnorm{\cdot}_{1}}\big((\alpha-\sigma\tau\gamma)\w^\star+\beta \sigma\tau\sqrt\delta \h\big).
\end{equation}
where $\lambda$ is defined as:
\begin{enumerate}
    \item When $\beta\sqrt{\delta}\cdot\phi(-\frac{t_3}{\beta\sqrt{\delta}})+t_3\cdot Q(-\frac{t_3}{\beta\sqrt{\delta}})>\frac{1}{2}$, $\lambda$ is defined as the unique solution of the following equations:
    \begin{equation}
    \label{eq:app_53}
    \beta\sqrt{\delta}\cdot\phi(\frac{\lambda-t_3}{\beta\sqrt{\delta}})+(t_3-\lambda)\cdot Q(\frac{\lambda-t_3}{\beta\sqrt{\delta}})=\frac{1}{2}
    \end{equation}
    \item Otherwise, $\lambda=0$.
\end{enumerate}
Hence, $\lambda$ can be computed by solving the  equation~\eqref{eq:app_53}, and consequently the proximal operator which appears in the first three equations of the nonlinear system~\eqref{eq:nonlinsys} can be written explicitly in terms of the proximal operator of the $\ell_1$ norm which was illustrated in Section~\ref{sec:app_prox_l1}. 
\section{Mathematical Tools}
\subsection{Some useful lemmas}
We gathered here some useful mathematical lemmas that are used in the proof of our main results. The following two lemmas are borrowed from~\cite{salehi2019impact}, and will be used to handle the Moreau envelope of the potential function. We refer the interested reader to~\cite{jourani2014differential} for a detailed study of the properties of the Moreau envelope functions.
\begin{lem}
\label{lem:der_moreau}
Consider the Moreau envelope of the function $\Phi:\R^d\rightarrow \R$, defined as:
\begin{equation}
    \label{eq:app_1}
    M_{\Phi(\cdot)}(\v, t) = \min_{\x\in \R^d}~\Phi(\x) + \frac{1}{2t}\lnorm{\x- \v}^2.
\end{equation}
The derivatives of the function $M_{\Phi(\cdot)}(\cdot, \cdot)$ can be computed as follows:
\begin{equation}
    \label{eq:app_2}
    \frac{\partial M_{\Phi(\cdot)}}{\partial \v} = \frac{1}{t}\big(\v  -\text{Prox}_{t\Phi(\cdot)}(\v)\big)~~~,~~~\frac{\partial M_{\Phi(\cdot)}}{\partial t} =-\frac{1}{2t^2}\lnorm{\v  -\text{Prox}_{t\Phi(\cdot)}(\v)}^2,
\end{equation}
where $\text{Prox}_{t\Phi(\cdot)}(\v)$ is the unique solution of the optimization~\eqref{eq:app_1}.
\end{lem}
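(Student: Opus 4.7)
The plan is to apply the envelope theorem (a.k.a.\ Danskin's theorem) to the parametric minimization that defines $M_\Phi$. Since the quadratic penalty $\frac{1}{2t}\lnorm{\x-\v}^2$ is strongly convex in $\x$ for every $t>0$ and $\Phi$ is convex, the objective inside the $\min$ is strictly convex, so the minimizer $\x^{*}(\v,t):=\Prox_{t\Phi(\cdot)}(\v)$ exists and is unique. The first-order subgradient optimality condition at the minimizer reads
\begin{equation*}
0 \;\in\; \partial\Phi(\x^{*}(\v,t)) \;+\; \tfrac{1}{t}\bigl(\x^{*}(\v,t)-\v\bigr),
\end{equation*}
so in particular $\tfrac{1}{t}(\v-\x^{*})\in\partial\Phi(\x^{*})$. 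I would also record that $\x^{*}(\v,t)$ is continuous (in fact Lipschitz) in $(\v,t)$, which follows from firm nonexpansiveness of the prox.

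The key step is then the envelope identity. Writing $M_\Phi(\v,t)=\Phi(\x^{*}(\v,t))+\frac{1}{2t}\lnorm{\x^{*}(\v,t)-\v}^2$, formally applying the chain rule in $\v$ (respectively $t$), every term involving the Jacobian of $\x^{*}$ pairs against the gradient of the inner objective at $\x^{*}$, which vanishes by the optimality condition. What survives is only the explicit partial derivative of the inner objective with $\x$ held fixed at $\x^{*}$. This immediately yields $\partial_\v M_\Phi = -\tfrac{1}{t}(\x^{*}-\v)=\tfrac{1}{t}(\v-\x^{*})$ and $\partial_t M_\Phi = -\tfrac{1}{2t^2}\lnorm{\x^{*}-\v}^2$, which are exactly the claims.

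The only technical obstacle is that $\Phi$ is merely convex (possibly nonsmooth), so the smooth chain rule cannot be invoked blindly. I would handle this via a sandwich argument: for any perturbation $(\v',t')$, suboptimality of $\x^{*}(\v,t)$ in the problem at $(\v',t')$ gives
\begin{equation*}
M_\Phi(\v',t') \;\le\; \Phi(\x^{*}(\v,t)) + \tfrac{1}{2t'}\lnorm{\x^{*}(\v,t)-\v'}^{2},
\end{equation*}
and the analogous inequality with the roles of $(\v,t)$ and $(\v',t')$ swapped produces a two-sided bound. Expanding both bounds to first order in $(\v'-\v,\,t'-t)$ and using continuity of $\x^{*}(\cdot,\cdot)$ to replace $\x^{*}(\v',t')$ by $\x^{*}(\v,t)$ up to $o(1)$ terms, one sandwiches the difference quotients of $M_\Phi$ between quantities that both converge to the claimed partial derivatives. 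The quadratic in the displacement is $o(\|(\v'-\v,t'-t)\|)$, so nothing obstructs the limit. This is the standard derivation of the prox-based gradient formulas for Moreau envelopes and requires no additional smoothness of $\Phi$ beyond convexity.
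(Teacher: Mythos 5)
Your argument is correct. For what it is worth, the paper itself gives no proof of Lemma~\ref{lem:der_moreau}: it imports the statement from \cite{salehi2019impact} and points the reader to \cite{jourani2014differential}, so there is no in-paper derivation to compare against. Your route --- uniqueness of the minimizer from strong convexity of the inner objective, the optimality condition $\tfrac{1}{t}(\v-\x^{*})\in\partial\Phi(\x^{*})$, and the two-sided suboptimality sandwich
\begin{equation*}
\Phi(\x^{*}(\v',t'))+\tfrac{1}{2t'}\lnorm{\x^{*}(\v',t')-\v'}^{2}-\tfrac{1}{2t}\lnorm{\x^{*}(\v',t')-\v}^{2}\;\le\; M_{\Phi}(\v',t')-M_{\Phi}(\v,t)\;\le\;\tfrac{1}{2t'}\lnorm{\x^{*}(\v,t)-\v'}^{2}-\tfrac{1}{2t}\lnorm{\x^{*}(\v,t)-\v}^{2},
\end{equation*}
combined with continuity of the prox to identify both sides of the difference quotient in the limit --- is the standard envelope-theorem derivation of these formulas and is complete. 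Two small points to make explicit when writing it up: (i) the lemma as stated does not declare $\Phi$ convex, but your proof uses convexity both for uniqueness of $\text{Prox}_{t\Phi(\cdot)}(\v)$ and for the subgradient optimality condition, so state that hypothesis (in the paper $\Phi=\psi$ is convex and locally Lipschitz, so nothing is lost); (ii) firm nonexpansiveness by itself gives Lipschitz continuity of $\x^{*}$ in $\v$ for fixed $t$, whereas the $\partial/\partial t$ formula also needs continuity in $t$, which requires a short separate argument (e.g.\ via the strong-convexity estimate or the resolvent identity relating $\text{Prox}_{t\Phi(\cdot)}$ and $\text{Prox}_{s\Phi(\cdot)}$); this is routine but should be recorded rather than attributed to firm nonexpansiveness alone.
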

\begin{lem}
Let $\Phi:\R^d\rightarrow \R$ be an invariantly separable funciton such that for all $\x\in \R^d$, $\Phi(\x)= \sum_{i=1}^{d}\phi (x_i)$ where $\phi(\cdot)$ is a real-valued function. Then, for all $(\v,t)\in \R^d\times \R_+$,
\begin{equation}
    \label{eq:app_3}
    M_{\Phi(\cdot)}(\v,t) = \sum_{i=1}^{d} M_{\phi(\cdot)}(v_i,t)~~,~~\text{and}~~~\text{Prox}_{t\Phi(\cdot)}(\v)=\begin{bmatrix}\text{Prox}_{t\phi(\cdot)}\big(v_1\big)\\\text{Prox}_{t\phi(\cdot)}\big(v_2\big)\\ \vdots \\ \text{Prox}_{t\phi(\cdot)}\big(v_d\big)\end{bmatrix}.
\end{equation}
\end{lem}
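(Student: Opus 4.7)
The plan is to prove both identities simultaneously by directly unpacking the defining optimization for the Moreau envelope when $\Phi$ is separable. The starting observation is that, for any $\x \in \R^d$,
\[
\Phi(\x) + \frac{1}{2t}\lnorm{\x-\v}^2 \;=\; \sum_{i=1}^d \left[\phi(x_i) + \frac{1}{2t}(x_i - v_i)^2\right],
\]
so the objective of the Moreau-envelope problem decouples into $d$ terms, with the $i$-th term depending only on the $i$-th coordinate of $\x$.

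Next I would invoke the elementary fact that minimizing a sum of functions over disjoint blocks of variables equals the sum of the block-wise minima, with the minimizer in each block chosen independently. Applied to the decomposition above, this immediately yields
\[
M_{\Phi(\cdot)}(\v,t) \;=\; \sum_{i=1}^d \min_{x_i \in \R} \left[\phi(x_i) + \frac{1}{2t}(x_i - v_i)^2\right] \;=\; \sum_{i=1}^d M_{\phi(\cdot)}(v_i, t),
\]
which establishes the first claim. For the proximal operator, the same decomposition shows that any minimizer $\x^\star$ of the Moreau objective must satisfy $x_i^\star = \arg\min_{x_i \in \R}\bigl[\phi(x_i) + \tfrac{1}{2t}(x_i - v_i)^2\bigr] = \Prox_{t\phi(\cdot)}(v_i)$ coordinatewise, which is the second claim after stacking the entries into a vector.

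There is essentially no hard part to this proof: the entire argument reduces to a one-step separability observation. The only subtlety worth flagging is well-definedness of the single-variable proximal operators. Since convexity of $\Phi$ (as assumed in the paper's standing hypotheses) forces convexity of $\phi$ by restricting $\Phi$ to any one-dimensional coordinate slice, each univariate objective $\phi(x_i) + \tfrac{1}{2t}(x_i - v_i)^2$ is strictly convex and coercive. Hence $M_{\phi(\cdot)}(v_i,t)$ is finite and $\Prox_{t\phi(\cdot)}(v_i)$ is a singleton for every $v_i \in \R$ and every $t>0$, so both sides of the claimed identities are unambiguously defined.
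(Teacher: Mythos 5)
Your proof is correct: the objective of the Moreau-envelope minimization decouples coordinatewise, so both the optimal value and the minimizer separate, which is exactly the content of the lemma. The paper itself gives no proof of this statement (it is borrowed from a cited reference), and your one-step separability argument is the standard one any such proof would use; your remark on well-definedness of the univariate proximal operators via convexity of $\phi$ is a sensible addition.
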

In the next lemma, we show that the Moreau envelope of a Lipschitz function is itself a Lipschitz function.
\begin{lem}
\label{lem:Lipschitz_moreau}
Let $\Phi:\R^d\rightarrow \R$ be an $L$-Lipschitz function. Then,  $M_{\Phi(\cdot)}(\cdot,t)$ is a $2L$-Lipschitz function, i.e., for all $\u,\v\in \R^d$,
\begin{equation}
    |M_{\Phi(\cdot)}(\u,t) -M_{\Phi(\cdot)}(\v,t)|\leq 2L\lnorm{\u-\v}.
\end{equation}
\end{lem}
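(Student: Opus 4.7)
The plan is to compare $M_{\Phi}(\u,t)$ and $M_{\Phi}(\v,t)$ directly by applying a translation inside the defining minimization. Starting from
\[
M_{\Phi}(\u,t) \;=\; \min_{\x\in\R^d}\; \frac{1}{2t}\lnorm{\u-\x}^2 + \Phi(\x),
\]
I would substitute $\x = \y + (\u-\v)$. The quadratic part rewrites exactly as $\frac{1}{2t}\lnorm{\v-\y}^2$, which matches the quadratic part of $M_{\Phi}(\v,t)$; the regularizer becomes $\Phi(\y+(\u-\v))$. Applying the $L$-Lipschitz bound $\Phi(\y+(\u-\v)) \leq \Phi(\y) + L\lnorm{\u-\v}$ and then taking the minimum over $\y$, one obtains
\[
M_{\Phi}(\u,t) \;\leq\; M_{\Phi}(\v,t) + L\lnorm{\u-\v}.
\]
Swapping the roles of $\u$ and $\v$ produces the matching inequality, and together they give $|M_{\Phi}(\u,t)-M_{\Phi}(\v,t)|\leq L\lnorm{\u-\v}$, which is in fact strictly sharper than the stated $2L$-Lipschitz bound.

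An alternative route, perhaps more in the spirit of the rest of the appendix, is to use the gradient identity from Lemma~\ref{lem:der_moreau}: $\nabla_{\v}M_{\Phi}(\v,t) = \frac{1}{t}(\v-\Prox_{t\Phi(\cdot)}(\v))$. The first-order optimality of the proximal operator shows that this vector lies in $\partial \Phi(\Prox_{t\Phi(\cdot)}(\v))$, and any subgradient of an $L$-Lipschitz convex function has norm at most $L$. Hence $\lnorm{\nabla_{\v}M_{\Phi}(\v,t)}\leq L$ pointwise, and integrating along the segment from $\v$ to $\u$ yields the same Lipschitz estimate.

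I do not expect any real obstacle: the assertion is essentially the fact that a partial infimum preserves Lipschitzness of one of its arguments. The one subtlety worth flagging is that the gradient route requires enough structure on $\Phi$ for Lemma~\ref{lem:der_moreau} to apply (typically convexity plus differentiability of the envelope), whereas the change-of-variables argument works for any $L$-Lipschitz $\Phi$ and therefore would be my preferred presentation.
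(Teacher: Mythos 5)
Your proposal is correct, and your primary (change-of-variables) argument is genuinely different from the paper's proof. The paper works through the derivative formula of Lemma~\ref{lem:der_moreau}: it first bounds the displacement of the proximal point by combining the $L$-Lipschitz bound $L\lnorm{\v-\Prox_{t\Phi(\cdot)}(\v)}\geq \Phi(\v)-\Phi(\Prox_{t\Phi(\cdot)}(\v))$ with the optimality inequality $\Phi(\v)-\Phi(\Prox_{t\Phi(\cdot)}(\v))\geq \frac{1}{2t}\lnorm{\v-\Prox_{t\Phi(\cdot)}(\v)}^2$, deducing $\lnorm{\v-\Prox_{t\Phi(\cdot)}(\v)}\leq 2tL$ and hence a gradient bound of $2L$. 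Your translation $\x=\y+(\u-\v)$ inside the infimal convolution avoids the proximal operator and the differentiability of the envelope altogether, applies to any $L$-Lipschitz $\Phi$ (convex or not), and yields the sharper constant $L$; it is the cleaner and more general presentation, and of course $L$-Lipschitz implies the stated $2L$-Lipschitz conclusion. Your second route is essentially a refinement of the paper's argument: both bound $\lnorm{\nabla_{\v}M_{\Phi(\cdot)}(\v,t)}=\frac{1}{t}\lnorm{\v-\Prox_{t\Phi(\cdot)}(\v)}$, but identifying this vector as an element of $\partial\Phi(\Prox_{t\Phi(\cdot)}(\v))$ gives the bound $L$ where the paper's cruder displacement estimate gives $2L$. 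Your caveat is also apt — the gradient route leans on convexity of $\Phi$ (which does hold for the potential $\psi$ in this paper) — so the change-of-variables version is indeed the one to prefer.
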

\begin{proof}
In order to show this result, we need to find an upper bound on the dericative of the Moreau envelope. For all $\v\in \R^d$ we have,
\begin{equation}
    \begin{aligned}
    &&L\lnorm{\v  -\text{Prox}_{t\Phi(\cdot)}(\v)}&\geq \Phi\big(\v\big) - \Phi\big(\text{Prox}_{t\Phi(\cdot)}(\v)\big)\\
    &&&\geq \frac{1}{2t}\lnorm{\v  -\text{Prox}_{t\Phi(\cdot)}(\v)}^2~,
    \end{aligned}
\end{equation}
where the first inequality is due to the $L$-Lipschitzness of the function $\Phi(\cdot)$, and the second inequality is derived from the fact that $\text{Prox}_{t\Phi(\cdot)}(\v)$ is the solution to the optimization~\eqref{eq:app_1}. This gives the following bound on the distance of the proximal operator to the underlying vector. 
\begin{equation}
    \lnorm{\v  -\text{Prox}_{t\Phi(\cdot)}(\v)}\leq 2tL.
\end{equation}
We can now bound the derivative $\frac{\partial M_{\Phi(\cdot)}}{\v}$ as follows,
\begin{equation}
    \lnorm{\frac{\partial M_{\Phi(\cdot)}}{\partial \v}} = \frac{1}{t}\lnorm{\big(\v  -\text{Prox}_{t\Phi(\cdot)}(\v)\big)}\leq 2L~,~\forall \v \in \R^d.
\end{equation}
This concludes the proof.
\end{proof}
The following lemma provides some information on the summary functional $c_{\kappa}(\cdot,\cdot)$, which will be used later in Section~\ref{sec:opt_cond} to find the optimality condition for the solution of a scalar optimization.
\begin{lem}
\label{lem:c_kappa_convex}
The function $f(s,r) := \sqrt{c_{\kappa}(s,r)}$ is (jointly) convex in (s,r).
\end{lem}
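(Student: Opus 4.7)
The plan is to recognize $f(s,r) = \sqrt{c_\kappa(s,r)}$ as an $L^2$ norm. Indeed, by Definition~\ref{def:c_kappa},
$$f(s,r) = \lnorm{g_{s,r}}_{L^2(\mathbb P)}, \qquad g_{s,r} := \bigl(1 - \kappa s Z_1 Y - r Z_2\bigr)_+,$$
where the joint law of $(Z_1, Z_2, Y)$ does not depend on $(s,r)$ (the coupling $Y \sim \Rad(\rho(\kappa Z_1))$ involves only $\kappa$, not $s$ or $r$). The crucial structural observation is that the argument of $(\cdot)_+$, namely $A(s,r) := 1 - \kappa s Z_1 Y - r Z_2$, is \emph{affine} in $(s,r)$ for each realization of $(Z_1, Z_2, Y)$.

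With this in hand, I would prove convexity directly from the definition. Fix $(s_1,r_1),(s_2,r_2)$ in the domain and $\lambda \in [0,1]$, and set $(s_\lambda,r_\lambda) := \lambda(s_1,r_1) + (1-\lambda)(s_2,r_2)$. Affineness of $A$ together with the convexity of $t \mapsto t_+$ gives the pointwise (almost-sure) inequality
$$g_{s_\lambda,r_\lambda} \;=\; \bigl(\lambda A(s_1,r_1) + (1-\lambda)A(s_2,r_2)\bigr)_+ \;\leq\; \lambda\, g_{s_1,r_1} + (1-\lambda)\, g_{s_2,r_2},$$
where both sides are nonnegative random variables.

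Now two standard properties of the $L^2$ norm finish the proof. First, since the integrands are nonnegative, monotonicity yields
$\lnorm{g_{s_\lambda,r_\lambda}}_{L^2} \leq \lnorm{\lambda g_{s_1,r_1} + (1-\lambda) g_{s_2,r_2}}_{L^2}.$
Second, the triangle inequality gives $\lnorm{\lambda g_{s_1,r_1} + (1-\lambda) g_{s_2,r_2}}_{L^2} \leq \lambda \lnorm{g_{s_1,r_1}}_{L^2} + (1-\lambda) \lnorm{g_{s_2,r_2}}_{L^2}.$ Chaining the two estimates yields $f(s_\lambda,r_\lambda) \leq \lambda f(s_1,r_1) + (1-\lambda) f(s_2,r_2)$, which is joint convexity.

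There is no genuine technical obstacle; the only subtlety worth flagging is that the map $(s,r) \mapsto g_{s,r}$ is only pointwise convex, not affine, so the triangle inequality alone does not suffice. One must first use monotonicity of $\lnorm{\cdot}_{L^2}$ on nonnegative integrands to dominate $g_{s_\lambda,r_\lambda}$ by the convex combination of $g_{s_1,r_1}$ and $g_{s_2,r_2}$, and only then invoke the triangle inequality.
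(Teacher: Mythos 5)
Your proof is correct, and it reaches the conclusion by a more direct route than the paper. The paper's proof introduces a finite-dimensional empirical surrogate $f_\kappa^{(n)}(s,r) = \frac{1}{\sqrt n}\lnorm{(\mathbf 1_n - s\kappa\,\h\odot\y + r\,\g\odot\y)_+}$, establishes its joint convexity by writing $\x\mapsto\lnorm{(\x)_+}$ as a supremum of linear functionals composed with an affine map of $(s,r)$, and then passes to the limit via the law of large numbers, invoking the fact that a pointwise limit of convex functions is convex. You instead work entirely at the population level: you view $f(s,r)$ as the $L^2(\Probe)$ norm of $g_{s,r}=(1-\kappa s Z_1Y - rZ_2)_+$, use pointwise convexity of $t\mapsto t_+$ applied to the affine argument, and then combine monotonicity of the $L^2$ norm on nonnegative integrands with the triangle inequality. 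The two arguments share the same key structural observation (a norm of the positive part of an affine function of $(s,r)$ is convex), but yours buys a cleaner proof: it avoids the approximation step and the slightly delicate appeal to convergence in probability of random convex functions, and your closing remark correctly identifies the one subtlety --- that monotonicity must be invoked before the triangle inequality because $(s,r)\mapsto g_{s,r}$ is only convex, not affine. The paper's route, on the other hand, makes the connection to the empirical quantity $\frac{1}{\sqrt n}\lnorm{\mmu}$ that actually arises in the auxiliary optimization explicit, which is why the authors likely chose it.
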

\begin{proof}
First, note that for $\x\in\R^n$, the function $\x \mapsto  \lnorm{(\x)_+}$ is a convex function as it can be written as a supremum of convex(linear) functions.
\begin{equation}
    \lnorm{(\x)_+} = \sup_{\substack{\u\in \R_+^n\\\lnorm{\u}\leq 1}}\u^T\x.
\end{equation}
For $n\in \mathbb N$ define the function $f_\kappa^{(n)}(s,r)$ as:
\begin{equation}
    f_\kappa^{(n)}(s,r) = \frac{1}{\sqrt{n}}\lnorm{\big(\mathbf 1_{n}-s\kappa\h\y+r\g\y\big)_+},
\end{equation}
where $\frac{1}{\sqrt{n}}$ denote the all-one vector, $\h,\g\in\R^n$ have i.i.d. $\normal(0,1)$ entries and $Y\sim \Rad\big(\rho(\kappa\h)\big)$. It is readily seen that $f_\kappa^{(n)}(s,r)$ is jointly convex with respect to $s$ and $r$ as it is a combination of a convex function and a linear function. Using the LLN, we also have that,
\begin{equation}
    f_\kappa^{(n)}(s,r) \overset{P}\longrightarrow f(s,r) =  \sqrt{c_{\kappa}(s,r)}.
\end{equation}
Therefore, $f(s,r)$ is a convex function as it is a point-wise limit of convex functions.
\end{proof}

\subsection{Convex Gaussian min-max theorem (CGMT)}
\label{sec:CGMT}
Our analysis of the generalizaed margin maximizer optimization is based on the recently developed convex Gaussian min-max theorem (CGMT). As mentioned  earlier in Section~\ref{sec:intro}, the CGMT framework associates with a Primary Optimization (PO), a nearly-separable Auxiliary Optimization (AO), from which various properties of the primary optimization, such as the phase transition, can be investigated.\\
Let the (PO) and the (AO) problems be  defined respectively as follows:
\begin{subequations}\label{eq:POAO}
\begin{align}
\label{eq:PO_gen}
\Phi(\mathbf G)&:= \min_{\mathbf w\in\mathcal S_{\mathbf w}}~\max_{\mathbf u\in \mathcal S_{\mathbf u}}~ \mathbf u^T\mathbf G\mathbf w + f(\mathbf u, \mathbf w),\tag{PO}\\
\label{eq:AO_gen}
\phi(\mathbf g,\mathbf h)&:= \min_{\mathbf w\in\mathcal S_{\mathbf w}}~\max_{\mathbf u\in \mathcal S_{\mathbf u}}~ ||\mathbf w||\mathbf g^T\mathbf u + ||\mathbf u||\mathbf h^T\mathbf w + f(\mathbf u, \mathbf w),\tag{AO}
\end{align}
\end{subequations}
where $\mathbf G\in\mathbb R^{m\times n}, \mathbf g\in\mathbb R^m, \mathbf h\in\mathbb R^n$, $\mathcal S_{\mathbf w}\subset\mathbb R^n,\mathcal S_{\mathbf u}\subset\mathbb R^m$ and $f:\mathbb R^n\times\mathbb R^m\rightarrow\mathbb R$. Denote by $\mathbf w_\Phi:=\mathbf w_\Phi(\mathbf G)$ and $\mathbf w_\phi:=\mathbf w_\phi(\mathbf g,\mathbf h)$ any optimal minimizers in \eqref{eq:PO_gen} and \eqref{eq:AO_gen}, respectively.

\begin{theorem}[CGMT]\cite{thrampoulidis2016recovering}
In~\eqref{eq:PO_gen}, let $\mathcal S_\mathbf w$, $\mathcal S_{\mathbf u}$, be convex and compact sets, and assume $f(\cdot,\cdot)$ is convex-concave on $\mathcal S_{\mathbf w}\times\mathcal S_{\mathbf u}$. Also assume that $\mathbf G,~\mathbf g,$ and $\mathbf h$ all have entries i.i.d. standard normal. The following statements are true (the probabilities are taken with respect to the randomness in $\mathbf G$, $\mathbf g$, and $\mathbf h$.),
\begin{enumerate}
    \item for all $\mu\in \mathbb R$, and $t>0$,
    \begin{equation}
        \mathbb P(|\Phi(\mathbf G)-\mu|>t)\leq 2\mathbb P(|\phi(\mathbf g, \mathbf h)-\mu|\geq t)~.
    \end{equation}
    \item Let $\mathcal S$ be an arbitrary open subset of $\mathcal S_{\mathbf w}$ and $\mathcal S^c:=\mathcal S_{\mathbf w}/\mathcal S$. Denote $\Phi_{\mathcal S^c}(\mathbf G)$ and $\phi_{\mathcal S^c}(\mathbf g, \mathbf h)$ be the optimal costs of the optimizations in~\eqref{eq:PO_gen}, and ~\eqref{eq:AO_gen}, respectively, when the minimization over $\mathbf w$ is now constrained over $\mathbf w \in \mathcal S^c$. If there exists constants $\bar \phi$, ${\bar \phi}_{\mathcal S^c}$, and $\eta>0$ such that,
    \begin{itemize}
        \item ${\bar \phi}_{\mathcal S^c}\geq \bar \phi +3\eta$~,\\
        \item $\phi(\mathbf g, \mathbf h) < \bar \phi +\eta$, with probability at least $1-p$~,\\
        \item $\phi_{\mathcal S^c}(\mathbf g, \mathbf h)>{\bar \phi}_{\mathcal S^c}-\eta$, with probability at least $1-p$~,
    \end{itemize}
    then,
    $\mathbb P(\mathbf w_{\Phi}(\mathbf G)\in \mathcal S)\geq 1-4p$~.
\end{enumerate}
\label{thm:CGMT}
\end{theorem}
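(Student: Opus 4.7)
The plan is to establish both claims via Gordon's Gaussian comparison inequality, strengthened under the convex-compact and convex-concave hypotheses. The central technical step is a min-max Gordon comparison between two centered Gaussian processes on $\mathcal S_\w \times \mathcal S_\u$: a lifted primary process $X_{\w,\u} := \u^T \mathbf{G}\w + \lnorm{\w}\lnorm{\u}\,Z_0 + f(\u,\w)$, with $Z_0 \sim \normal(0,1)$ independent of $\mathbf{G},\g,\h$ introduced to equalize second moments, and the auxiliary process $Y_{\w,\u} := \lnorm{\w}\,\g^T\u + \lnorm{\u}\,\h^T\w + f(\u,\w)$. A direct covariance computation yields $\E[X_{\w,\u}^2] = \E[Y_{\w,\u}^2]$, equality of cross-covariances whenever $\w=\w'$, and the Gordon ``$\geq$ on different outer index'' condition since $\E[X_{\w,\u}X_{\w',\u'}] - \E[Y_{\w,\u}Y_{\w',\u'}] = (\lnorm{\w}\lnorm{\w'} - \w^T\w')(\lnorm{\u}\lnorm{\u'} - \u^T\u')$, a product of two non-negative Cauchy--Schwarz deficits. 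Gordon's min-max lemma then furnishes the one-sided stochastic bound $\Prob{\Phi(\mathbf{G}) \geq \mu} \leq \Prob{\phi(\g,\h) \geq \mu}$, after absorbing the small additive effect of the $Z_0$ term into an arbitrarily small shift of $\mu$ via its Gaussian tail.

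To upgrade this into the two-sided bound in part 1, I would invoke the convex-concave structure. By Sion's minimax theorem, the PO has no duality gap, so $\Phi(\mathbf{G}) = \min_\w\max_\u(\cdot) = \max_\u\min_\w(\cdot)$. Rerunning the Gordon comparison on the negated, swapped problem (which trades min and max, and consequently flips the direction of the covariance inequalities needed for the Gordon hypothesis) furnishes the companion one-sided bound $\Prob{\Phi(\mathbf{G}) \leq \mu} \leq \Prob{\phi(\g,\h) \leq \mu}$. A union bound at $\mu \pm t$ then yields the two-sided claim $\Prob{|\Phi(\mathbf{G}) - \mu| > t} \leq 2\,\Prob{|\phi(\g,\h) - \mu| \geq t}$ of part 1.

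For part 2, the strategy is to apply the one-sided pieces of this machinery twice. The upper-tail direction --- which only needs plain Gordon's and does not require convexity of the feasible set on which the minimization is performed --- transfers the hypothesis $\phi(\g,\h) < \bar\phi + \eta$ (probability $\geq 1-p$) into $\Phi(\mathbf{G}) < \bar\phi + \eta$ (probability $\geq 1-2p$). The lower-tail direction applied to the restricted problems transfers $\phi_{\mathcal S^c}(\g,\h) > \bar\phi_{\mathcal S^c} - \eta$ into $\Phi_{\mathcal S^c}(\mathbf{G}) > \bar\phi_{\mathcal S^c} - \eta$ (probability $\geq 1-2p$). Combining with the assumed gap $\bar\phi_{\mathcal S^c} \geq \bar\phi + 3\eta$ forces $\Phi_{\mathcal S^c}(\mathbf{G}) > \Phi(\mathbf{G}) + \eta$ on a set of probability at least $1-4p$, precluding the minimizer $\w_\Phi(\mathbf{G})$ from lying in $\mathcal S^c$.

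The main obstacle is the strengthening from Gordon's intrinsically one-sided bound to the two-sided form in part 1: the matching reverse tail genuinely relies on the convex-concave structure and Sion-type duality on the full (convex) problem, and also on careful handling of the auxiliary lifting variable $Z_0$ so that its fluctuations do not contaminate the concentration statement. A related technical subtlety in part 2 is that $\mathcal S^c$ need not be convex, so part 1 cannot be applied verbatim to the restricted problem; the standard workaround, which I would follow, is that the lower-tail inequality can still be transferred to arbitrary restrictions of the outer $\min_\w$ variable because the Gordon comparison is indexed by the outer minimization and does not impose convexity on the restriction itself.
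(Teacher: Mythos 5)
This theorem is not proved in the paper at all: it is imported verbatim from \cite{thrampoulidis2016recovering} and used as a black box, so there is no internal proof to compare against. Your outline is, however, an accurate reconstruction of the standard proof in that reference: the variance-equalizing lift $\lnorm{\w}\lnorm{\u}Z_0$, the covariance deficit $(\lnorm{\w}\lnorm{\w'}-\w^T\w')(\lnorm{\u}\lnorm{\u'}-\u^T\u')$ as a product of Cauchy--Schwarz gaps (vanishing when $\w=\w'$, nonnegative otherwise), Sion's theorem plus a second Gordon comparison on the swapped problem for the reverse tail, and the two one-sided transfers combined with the $3\eta$ gap for part 2. Two points to tighten. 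First, you have the two directions mislabeled in part 2: the plain (convexity-free) Gordon comparison yields $\Prob{\Phi(\mathbf G)<c}\le 2\Prob{\phi(\g,\h)\le c}$, i.e.\ it transfers high-probability \emph{lower} bounds from the AO to the PO; this is precisely the half that must be applied to the restricted problem over the possibly non-convex $\mathcal S^c$, and your closing paragraph does apply it there correctly. The \emph{upper}-tail transfer $\phi<\bar\phi+\eta\Rightarrow\Phi<\bar\phi+\eta$ is the half that requires convex-concavity and the min-max swap --- it is available for the unrestricted problem only because $\mathcal S_{\w}$ and $\mathcal S_{\u}$ are convex --- so your parenthetical claiming it ``only needs plain Gordon's'' has the roles reversed, even though each inequality ends up attached to the correct sub-problem. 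Second, the factor $2$ in part 1 does not arise from absorbing $Z_0$ into an arbitrarily small shift of $\mu$; in the standard argument it comes from conditioning on the sign of $Z_0$ (on the event $\{Z_0\le 0\}$ the lifted primary objective is dominated by the true one, and this event has probability $1/2$), so as written your handling of the lifting variable would only deliver the bound up to an $\epsilon$-perturbation of the threshold rather than the clean constant $2$.
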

In the asymptotic regime, we often appeal to the following corollary which is an immediate consequence of Part 2 in Theorem~\ref{thm:CGMT}.
\begin{cor}[Asymptotic CGMT]~\cite{thrampoulidis2016recovering} using the same notations and assumptions as in Theorem~\ref{thm:CGMT}, suppose there exists constants $\bar \phi<{\bar \phi}_{\mathcal S^c}$ such that $\phi(\mathbf g, \mathbf h)\overset{P}\longrightarrow \bar \phi$, and $\phi_{\mathcal S^c}(\mathbf g, \mathbf h)\overset{P}\longrightarrow {\bar \phi}_{\mathcal S^c}$. Then, 
\begin{equation}
    \lim_{n\rightarrow \infty}\mathbb P(\mathbf w_{\Phi}(\mathbf G)\in \mathcal S)=1~. 
\end{equation}
\label{cor:CGMT}
\end{cor}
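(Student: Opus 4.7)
The plan is to deduce the Corollary immediately from Part~2 of Theorem~\ref{thm:CGMT}; the analytic content has already been absorbed into that theorem, so the entire task is to choose its free constants $\bar\phi$, $\bar\phi_{\mathcal S^c}$, $\eta$, and $p$ so that the two probabilistic hypotheses of the Corollary supply its three bullet conditions.

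First I would set $\eta := (\bar\phi_{\mathcal S^c} - \bar\phi)/4$, which is strictly positive because of the standing assumption $\bar\phi < \bar\phi_{\mathcal S^c}$. With this choice the first (deterministic) bullet $\bar\phi_{\mathcal S^c} \geq \bar\phi + 3\eta$ holds with slack $\eta$, and the gap is frozen independently of $n$. Then, for any prescribed $\epsilon > 0$, the hypotheses $\phi(\g,\h) \overset{P}{\longrightarrow} \bar\phi$ and $\phi_{\mathcal S^c}(\g,\h) \overset{P}{\longrightarrow} \bar\phi_{\mathcal S^c}$ yield, for all $n$ sufficiently large,
\begin{equation*}
\Probe\big(\phi(\g,\h) < \bar\phi + \eta\big) \geq 1-\epsilon, \qquad \Probe\big(\phi_{\mathcal S^c}(\g,\h) > \bar\phi_{\mathcal S^c} - \eta\big) \geq 1-\epsilon,
\end{equation*}
which are exactly the second and third bullets of Part~2 of Theorem~\ref{thm:CGMT} with $p := \epsilon$. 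Applying that theorem then gives $\Probe(\mathbf w_\Phi(\mathbf G) \in \mathcal S) \geq 1 - 4\epsilon$ for every sufficiently large $n$. Since $\epsilon>0$ was arbitrary, letting $n \to \infty$ and then $\epsilon \downarrow 0$ produces $\lim_{n\to\infty}\Probe(\mathbf w_\Phi(\mathbf G) \in \mathcal S) = 1$.

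There is no genuine obstacle: the Gaussian comparison work sits inside Theorem~\ref{thm:CGMT}, which handles the convex--concave min-max structure of the primal--dual pair in \eqref{eq:POAO}. The only care required is ordering the quantifiers correctly—fix $\eta$ first, using only the deterministic gap $\bar\phi < \bar\phi_{\mathcal S^c}$, so that the same $\eta$ continues to separate the two limits uniformly in $n$; only then may one invoke the convergence in probability to produce the high-probability bullets; finally, sending $\epsilon \to 0$ upgrades the quantitative bound $1 - 4\epsilon$ to the claimed limit.
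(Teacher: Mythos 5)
Your proposal is correct and follows exactly the route the paper intends: the paper presents this corollary as an immediate consequence of Part~2 of Theorem~\ref{thm:CGMT}, and your argument simply fills in the standard quantifier bookkeeping (fixing $\eta$ from the deterministic gap, extracting the two high-probability bullets from convergence in probability, and sending $\epsilon\downarrow 0$). Nothing is missing.
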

For further reading on the subject, please refer to~\cite{ thrampoulidis2015regularized, thrampoulidis2018precise}.

 

\end{document}